\documentclass{article}

\usepackage[numbers, sort&compress]{natbib}  %
\usepackage{iclr2022_conference, times}

\usepackage[utf8]{inputenc} %
\usepackage[T1]{fontenc}    %
\usepackage{hyperref}       %
\usepackage{url}            %
\usepackage{booktabs}       %
\usepackage{amsfonts}       %
\usepackage{nicefrac}       %
\usepackage{microtype}      %
\usepackage{amsmath}
\usepackage{amssymb}
\usepackage{amsthm}
\usepackage{bbm}
\usepackage{cancel}
\usepackage{xcolor} 

\newcommand{\figleft}{{\em (Left)}}
\newcommand{\figcenter}{{\em (Center)}}
\newcommand{\figright}{{\em (Right)}}

\def\eqref#1{equation~\ref{#1}}

\def\1{\bm{1}}

\def\rva{{\mathbf{a}}}

\def\rvs{{\mathbf{s}}}

\def\rvz{{\mathbf{z}}}

\DeclareMathAlphabet{\mathsfit}{\encodingdefault}{\sfdefault}{m}{sl}
\SetMathAlphabet{\mathsfit}{bold}{\encodingdefault}{\sfdefault}{bx}{n}

\def\gA{{\mathcal{A}}}

\def\gC{{\mathcal{C}}}

\def\gS{{\mathcal{S}}}

\newcommand{\E}{\mathbb{E}}

\newcommand{\KL}{D_{\mathrm{KL}}}

\usepackage{mathtools}
\usepackage{wrapfig}
\usepackage{caption}
\usepackage{subcaption}
\usepackage{tikz}
\usetikzlibrary{shapes.geometric, arrows}

\DeclarePairedDelimiterX{\infdivx}[2]{(}{)}{%
  #1\;\delimsize\|\;#2%
}
\newcommand{\kl}{D_{\mathrm{KL}}\infdivx}

\newtheorem{theorem}{Theorem}[section]
\newtheorem{corollary}{Corollary}[theorem]
\newtheorem{lemma}[theorem]{Lemma}

\newtheorem{fact}{Fact}
\newtheorem{proposition}{Proposition}

\theoremstyle{definition}
\newtheorem{definition}{Definition}[section]

\title{The Information Geometry of \\Unsupervised Reinforcement Learning}

\author{%
  Benjamin Eysenbach$^{1 \; 2}$  \qquad Ruslan Salakhutdinov$^{1}$  \qquad Sergey Levine$^{2 \; 3}$ \\
  \And
  \vspace{-2.5em} \\
  $^{1}$Carnegie Mellon University, \quad $^{2}$Google Brain, \quad $^{3}$UC Berkeley \\
  \texttt{beysenba@cs.cmu.edu}
}

\iclrfinalcopy
\begin{document}

\maketitle

\begin{abstract}
How can a reinforcement learning (RL) agent prepare to solve downstream tasks if those tasks are not known a priori? One approach is unsupervised skill discovery, a class of algorithms that learn a set of policies without access to a reward function. Such algorithms bear a close resemblance to representation learning algorithms (e.g., contrastive learning) in supervised learning, in that both are pretraining algorithms that maximize some approximation to a mutual information objective. While prior work has shown that the set of skills learned by such methods can accelerate downstream RL tasks, prior work offers little analysis into whether these skill learning algorithms are optimal, or even what notion of optimality would be appropriate to apply to them. In this work, we show that unsupervised skill discovery algorithms based on mutual information maximization do not learn skills that are optimal for every possible reward function. However, we show that the distribution over skills provides an optimal initialization minimizing regret against adversarially-chosen reward functions, assuming a certain type of adaptation procedure. Our analysis also provides a geometric perspective on these skill learning methods.
\end{abstract}

\section{Introduction}

The high sample complexity of reinforcement learning (RL) algorithms has prompted a large body of prior work to study pretraining of RL agents. During the pretraining stage, the agent collects \emph{unsupervised} experience from the environment that is not labeled with any rewards. Prior methods have used this pretraining stage to learn representations of the environment that might assist the learning of downstream tasks. For example, some methods learn representations of the observations~\citep{laskin2020curl, schwarzer2021pretraining} or representations of the dynamics model~\citep{ebert2018visual, ha2018world, pmlr-v119-sekar20a}. In this work, we focus on methods that learn a set of potentially-useful policies, often known as \emph{skills}~\citep{salge2014empowerment, mohamed2015variational, gregor2016variational, achiam2018variational, eysenbach2018diversity}. That is, the learned representation corresponds to a reparametrization of policies.
The aim of this unsupervised pretraining is to learn skills that, when a reward function is given, can quickly be combined or composed to maximize this reward. Prior work has demonstrated that this general approach does accelerate learning downstream RL~\citep{gregor2016variational, eysenbach2018diversity, achiam2018variational}.  However, prior work offers little analysis about when and where such methods are provably effective. Even simple questions, such as what it means for a set of skills to be optimal, remain unanswered.

Algorithms for unsupervised skill learning are conceptually related to the representation learning methods used to improve supervised learning~\citep{gutmann2010noise, belghazi18a, wu2018unsupervised, oord2018representation, hjelm2018learning, he2020momentum}. Both typically maximize a lower bound on mutual information, and the learned representations are often combined linearly to solve downstream tasks~\citep{hjelm2018learning, oord2018representation}. However, whereas prior work in supervised learning has provided thorough analysis of when and where these representation learning methods produce useful features~\citep{kraskov2004estimating, song2019understanding, mcallester2020formal}, there has been comparatively little analysis into when unsupervised skill learning methods produce skills that are useful for solving downstream RL tasks.

In this paper, we analyze when and where existing skill learning methods based on mutual information maximization are (or are not) optimal for preparing to solve unknown, downstream tasks.
On the one hand, we show that the skills learned by these methods are not complete, in that they cannot be used to represent the solution to every RL problem. This result implies that using the learned skills for hierarchical RL may result in suboptimal performance, and suggests new opportunities for better skill learning algorithms.
One the other hand, we show that existing methods acquire the optimal initialization for learning downstream tasks, under some assumptions about how adaptation is performed.
To the best of our knowledge, this is the first result showing that unsupervised skill learning methods are optimal in any sense.

Our analysis also illuminates a number of properties of these methods. For example, we show that every skill is optimal for some reward function, and we provide a nontrivial upper bound on the number of unique skills learned. This result implies that these methods cannot learn an infinite number of unique skills, and instead will learn duplicate copies of some skills.
The key to our analysis is to view RL algorithms and skill learning algorithms as geometric operations (see Fig.~\ref{fig:polytope}). Points correspond to distributions over states, and the set of all possible distributions is a convex polytope that lies on a probability simplex. We show that all reward-maximizing policies lie at vertices of this polytope and that maximizing mutual information corresponds to solving a facility assignment problem on the simplex.

The main contribution of this paper is a proof that skill learning algorithms based on mutual information %
are optimal for minimizing regret against unknown reward functions, assuming that adaptation is performed using a certain procedure. Our proof of optimality relies on certain problem assumptions, leaving the door open for future skill learning algorithms to perform better under different problem assumptions.
This contribution provides a rigorous notion of what it means for an unsupervised RL algorithm to be optimal, 
and also answers additional questions about unsupervised skill learning algorithms, such as whether the skills correspond to reward-maximizing policies and how many unique skills will be learned.

\vspace{-0.5em}
\section{Prior Work}
\label{sec:prior-work}
\vspace{-0.5em}

In the \emph{unsupervised RL} setting, an agent interacts with the environment without access to a reward function, with the aim of learning some representation of the environment that will assist in learning downstream tasks. Prior work has proposed many approaches for this problem, including learning a dynamics model~\citep{ebert2018visual, ha2018world, pmlr-v119-sekar20a}, learning compact representations of observations~\citep{laskin2020curl, schwarzer2021pretraining}, performing goal-conditioned RL without hand-specified rewards~\citep{chebotar2021actionable, schwarzer2021pretraining}, doing pure exploration~\citep{salge2014empowerment, mohamed2015variational, pathak2017curiosity, lee2019efficient, hazan2019provably, seo2021state} or learning collections of skills~\citep{gregor2016variational, eysenbach2018diversity, achiam2018variational, warde2018unsupervised, sharma2019dynamics, Hansen2020Fast, campos2020explore}.
These prior methods are similar in that they all learn \emph{representations}, with different methods learning representation of observations, dynamics, or policies.
While each approach works well in some setting (see~\citet{laskin2021urlb} for a recent benchmark), the precise connection between these pretraining methods and success on downstream tasks remains unclear. We will focus on unsupervised skill learning methods. 

The problem of unsupervised pretraining for RL has also been studied in the RL theory community~\citep{agarwal2020flambe, misra2020kinematic, modi2021model}, where it is referred to as the \emph{reward-free} setting. The algorithms proposed in these prior works use a version of goal-conditioned RL for pretraining. %
However, the aim of these methods is different from ours: these methods aim to collect a sufficient breadth of data, whereas we focus on learning good representations.

Our analysis uses the dual of the RL problem, which is written in terms of the policy's state distribution~\citep[Eq.~6.9.2]{puterman1990markov}.
In contrast to prior work that proposes algorithms for solving the dual problem~\citep{ng1999policy, Wang2007DualRF, nachum2020reinforcement}, we use the dual problem purely as an analytical tool. %
Our paper is similar to prior work that uses a geometric perspective to analyze RL algorithms~\citep{dadashi19a, bellemare2019geometric}. Whereas those prior works study the geometry of value functions and representations, ours will study the geometry of~policies.

\vspace{-0.5em}
\section{Preliminaries}
\label{sec:prelims}
\vspace{-0.5em}

We focus on infinite-horizon MDPs with discrete states $\gS$ and actions $\gA$, initial state distribution $p_0(\mathbf{s_0})$,  dynamics $p(\mathbf{s_{t+1}} \mid \mathbf{s_t}, \mathbf{a_t})$, and discount factor $\gamma \in [0, 1]$.
We assume a Markovian policy $\pi(\rva \mid \rvs)$, and define the discounted state occupancy measure of this policy as \mbox{$\rho^\pi(\rvs) = (1 - \gamma) \sum_{t=0}^\infty \gamma^t P_t^\pi(\rvs)$}, where $P_t^\pi(\rvs)$ is the probability that policy $\pi$ visits state $\rvs$ at time $t$.
We define $\gC$ to be the set of state marginal distributions that are feasible under the environment dynamics.
The RL objective can be expressed using the reward function $r(\rvs)$ and the state marginal distribution:
\begin{equation*}
    (1 - \gamma) \E_\pi \left[\sum_{t=0}^\infty \gamma^t r(\mathbf{s_t}) \right] = \E_{\rho^\pi(\rvs)}\left[r(\rvs) \right].
\end{equation*}
The factor of $1 - \gamma$ accounts for the fact that the sum of discount factors $1 + \gamma + \gamma^2 + \cdots = \frac{1}{1 - \gamma}$. Without loss of generality, we focus on \emph{state-dependent reward functions}; action-dependent reward functions can be handled by modifying the state to include the previous action.
While our analysis will ignore function approximation error, it applies to any MDP, including MDPs where observations correspond to features (e.g., activations of a frozen neural network) rather than original states.

\subsection{Unsupervised Skill Learning Algorithms}
\label{sec:diayn}

Prior skill learning algorithms learn a policy $\pi_{\theta}(\rva \mid \rvs, \rvz_\text{input})$ with parameters $\theta$ and conditioned on an additional input $\rvz_\text{input} \sim p(\rvz_\text{input})$. Let $\rho^{\pi_{\theta}}(\rvs \mid \rvz_\text{input})$ denote the state marginal distribution of policy $\pi_{\theta}(\rva \mid \rvs, \rvz_\text{input})$. We will focus on methods~\citep{eysenbach2018diversity, florensa2017stochastic} that maximize the mutual information between the representation $\rvz_\text{input}$ and the states $\rvs$ visited by policy $\pi_{\theta}(\rva \mid \rvs, \rvz_\text{input})$:
\begin{equation}
    \max_{\theta, p(\rvz_\text{input})} I(\rvs; \rvz_\text{input}) \triangleq \E_{p(\rvz_\text{input})} \E_{\rho^{\pi_\theta}(\rvs \mid \rvz_\text{input})}[ \log \rho^{\pi_\theta}(\rvs \mid \rvz_\text{input}) - \log \rho^{\pi_\theta}(\rvs)]. \label{eq:mi-practical}
\end{equation}
We will refer to such methods as mutual information skill learning (MISL), noting that our analysis might be extended to other mutual information objectives~\citep{salge2014empowerment, mohamed2015variational, achiam2018variational, sharma2019dynamics}. Because these methods have two learnable components, $\theta$ and $p(\rvz_\text{input})$, analyzing them directly can be challenging. Instead, we will use a simplified abstract model of skill learning where both the policy parameters $\theta$ and the latent variable $\rvz_\text{input}$ are part of a single representation, $\rvz = (\theta, \rvz_\theta)$. Then, we can define a skill learning algorithm as learning a single distribution $p(\rvz)$:
\begin{equation}
    \max_{p(\rvz)} I(\rvs; \rvz) \triangleq \E_{p(\rvz)} \E_{\rho(\rvs \mid \rvz)}[ \log \rho(\rvs \mid \rvz) - \log \rho(\rvs)]. \label{eq:mi}
\end{equation}
We can recover Eq.~\ref{eq:mi-practical} by choosing a distribution $(\rvz)$ that factors as $p(\rvz = (\theta, \rvz_\text{input}) = \delta(\theta)p(\rvz_\text{input})$.
Whereas skills learning algorithms are often viewed as optimizing individual skills, this simplified abstract model lets us think about these algorithms as assigning different probabilities to different policies a probability of zero to almost every skill. We will refer to the small number of policies that are given non-zero probability as the ``skills.''
We will show that, in general, the learned skills fail to cover all possible optimal policies. However, we show that the \emph{distribution} over skills that maximizes mutual information, when converted into a distribution over states, provides the best state distribution for learning policies to optimize an adversarially-chosen reward function (under some assumptions).

\section{Overview of Main Results}

Our main results analyze the relationship between the skills learned via the mutual information objective in Equation~\ref{eq:mi} and optimal policies. We will show that, in the general case, maximizing mutual information alone does not result in skills that cover the set of all optimal policies, \emph{no matter how many skills are learned}. Perhaps surprisingly, when the number of skills is large enough, additional skills no longer differentiate from the existing ones, despite some potentially optimal policies not being covered. However, we will show that the the distribution over skills learned by MISL is optimal in a different sense: when this distribution over skills is converted into a distribution over states, it provides a prior over states that is close to the state distributions of reward-maximizing policies. This prior is optimal in the sense that, when combined with a particular adaptation procedure, it minimizes regret against the worst-case reward function. We formally state this result below:
\begin{theorem}  \label{thm:main-result}
The optimal initialization for adapting to an unknown reward function is the average state marginal given by maximizing mutual information (Eq.~\ref{eq:mi}):
\begin{equation*} 
    \min_{\rho(\rvs) \in \gC} \max_{r(\rvs) \in \mathbbm{R}^{|\gS|}} \textsc{AdaptationObjective}(\rho(\rvs), r(\rvs)) = \max_{p(\rvz)} I(\rvs; \rvz),
\end{equation*}
where the adaptation objective is defined as
\footnotesize \begin{equation}
    \textsc{AdaptationObjective}(\rho(\rvs), r(\rvs)) \triangleq \min_{\rho^*(\rvs) \in \gC} \underbrace{\max_{\rho^+(\rvs) \in \gC} \E_{\rho^+(\rvs)}[r(\rvs)] - \E_{\rho^*(\rvs)}[r(\rvs)]}_{\text{regret}} + \kl{\rho^*(\rvs)}{\rho(\rvs)}. \label{eq:pretraining}
\end{equation} \normalsize
Moreover, the optimal prior can be written is the average state distribution of the skills: $\rho^*(\rvs) = \E_{p^*(\rvz)}[\rho(\rvs \mid \rvz)]$.
\end{theorem}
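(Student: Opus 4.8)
The plan is to reduce the nested minimax over the adaptation objective to a classical \emph{information radius} (minimax redundancy) computation, and then invoke the redundancy--capacity duality to identify that radius with the channel capacity $\max_{p(\rvz)} I(\rvs;\rvz)$. Concretely, I would proceed in two stages: (i) evaluate the worst-case reward for a fixed prior $\rho(\rvs)$, showing that the inner operations collapse to a single maximum KL divergence over $\gC$; and (ii) minimize the resulting expression over the prior, recognizing it as the minimax redundancy of the feasible family $\gC$.

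For stage (i), fix $\rho$ and abbreviate the optimal return $g(r) \triangleq \max_{\rho^+(\rvs)\in\gC}\E_{\rho^+(\rvs)}[r(\rvs)]$, which is independent of $\rho^*$ and $\rho$, so that the adaptation objective is $\min_{\rho^*(\rvs)\in\gC}\bigl( g(r) - \E_{\rho^*(\rvs)}[r(\rvs)] + \kl{\rho^*(\rvs)}{\rho(\rvs)} \bigr)$. For the upper bound I would substitute the reward-optimal $\rho^*_r \triangleq \argmax_{\rho^+\in\gC}\E_{\rho^+(\rvs)}[r(\rvs)]$ into the inner minimization; the regret term cancels and what remains is $\kl{\rho^*_r(\rvs)}{\rho(\rvs)} \le \max_{\rho^+\in\gC}\kl{\rho^+(\rvs)}{\rho(\rvs)}$, valid for \emph{every} $r$. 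For the matching lower bound, let $v^\star$ be the vertex of the polytope $\gC$ attaining $\max_{\rho^+}\kl{\rho^+}{\rho}$ (a convex function is maximized at a vertex), choose a reward $\hat r$ whose supporting hyperplane exposes $v^\star$, and scale $r = t\hat r$ with $t\to\infty$. Since $v^\star$ is the strict maximizer of $\E[\hat r]$, the regret penalty forces the minimizing $\rho^*_t \to v^\star$, and the objective converges to $\kl{v^\star(\rvs)}{\rho(\rvs)}$. This yields $\max_r \textsc{AdaptationObjective}(\rho(\rvs), r(\rvs)) = \max_{\rho^+(\rvs)\in\gC}\kl{\rho^+(\rvs)}{\rho(\rvs)}$.

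For stage (ii), the theorem reduces to $\min_{\rho(\rvs)\in\gC}\max_{\rho^+(\rvs)\in\gC}\kl{\rho^+(\rvs)}{\rho(\rvs)}$, the minimax redundancy of $\gC$. The easy direction follows from the identity $I(\rvs;\rvz) = \E_{p(\rvz)}[\kl{\rho(\rvs\mid\rvz)}{q(\rvs)}] - \kl{\bar\rho(\rvs)}{q(\rvs)}$, valid for any reference $q(\rvs)$ with $\bar\rho(\rvs) = \E_{p(\rvz)}[\rho(\rvs\mid\rvz)]$: taking $q=\bar\rho$ shows $\min_q \E_{p(\rvz)}[\kl{\rho(\rvs\mid\rvz)}{q(\rvs)}] = I(\rvs;\rvz)$, and bounding the expectation by its supremum gives $\max_{p(\rvz)} I(\rvs;\rvz) \le \min_\rho\max_{\rho^+}\kl{\rho^+}{\rho}$. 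The reverse inequality is the classical redundancy--capacity (saddle-point) duality, which I would cite directly. The capacity-achieving reference is then $\bar\rho(\rvs) = \E_{p^\star(\rvz)}[\rho(\rvs\mid\rvz)]$, which lies in $\gC$ by convexity; this is the optimal prior $\rho^\star(\rvs)$ and establishes the ``Moreover'' claim.

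The main obstacle is stage (i): justifying the supremum over the unbounded reward space $\R^{|\gS|}$. The upper-bound substitution is immediate, but the lower bound needs the limiting argument that the KL-regularized adapted distribution $\rho^*_t$ approaches the exposed vertex fast enough that the scaled regret $t\cdot\mathrm{gap}(\rho^*_t)$ stays bounded while $\kl{\rho^*_t(\rvs)}{\rho(\rvs)}\to\kl{v^\star(\rvs)}{\rho(\rvs)}$. I would control this by using the already-proved upper bound to deduce $\mathrm{gap}(\rho^*_t)=O(1/t)$ and then invoking continuity of the divergence. Two degenerate cases also require care: priors $\rho$ that fail to dominate some feasible $\rho^+$ (giving infinite divergence, which the outer minimization avoids because the capacity is finite), and non-uniqueness of the exposed vertex (handled by perturbing $\hat r$).
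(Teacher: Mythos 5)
Your proposal is correct, and it shares the paper's overall architecture --- reduce the nested minimax to the information radius $\min_{\rho(\rvs)\in\gC}\max_{\rho^+(\rvs)\in\gC}\kl{\rho^+(\rvs)}{\rho(\rvs)}$ and then finish with the redundancy--capacity theorem (the paper's Lemma~\ref{lemma:mi-minimax}) --- but your execution of the reduction is genuinely different. The paper works with explicit variational solutions: calculus of variations gives the Gibbs-form adapted policy $\rho^*(\rvs)\propto\rho(\rvs)e^{r(\rvs)}$, so the adaptation objective collapses to $\max_{\rho^+}\E_{\rho^+}[r(\rvs)]-\log\int\rho(\rvs)e^{r(\rvs)}ds$; it then swaps the two maximizations and uses a second variational computation to identify the worst-case reward $r(\rvs)=\log\rho^+(\rvs)-\log\rho(\rvs)+b$, whose substitution yields the KL divergence. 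You instead sandwich the value of $\max_r \textsc{AdaptationObjective}$: substituting the reward-optimal feasible point kills the regret and gives the upper bound $\max_{\rho^+\in\gC}\kl{\rho^+(\rvs)}{\rho(\rvs)}$, while scaled exposing rewards $t\hat r$ together with a compactness/continuity limit give the matching lower bound. Each route buys something. The paper's computation is shorter and produces interpretable byproducts: the exponential-tilting form of the adapted policy (which underlies the paper's natural-gradient/mirror-descent reading of the adaptation step) and the log-likelihood-ratio form of the worst-case reward. Your route is more careful on precisely the points the paper glosses over: the paper's variational step silently drops the constraint $\rho^*(\rvs)\in\gC$ (the Gibbs tilt of a feasible distribution need not be feasible), its worst-case reward is not actually in $\R^{|\gS|}$ when the maximizing $\rho^+$ sits at a vertex with zero-probability states (so the supremum over rewards is approached, not attained --- exactly what your limiting argument with $\mathrm{gap}(\rho^*_t)=O(1/t)$ handles), and the passage from the unconstrained minimum in Lemma~\ref{lemma:mi-minimax} to the minimum over $\gC$ requires observing that the capacity-achieving output distribution $\E_{p^*(\rvz)}[\rho(\rvs\mid\rvz)]$ lies in $\gC$ by convexity (Proposition~\ref{prop:convex}), which you state explicitly and which also delivers the theorem's ``Moreover'' clause.
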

Note that the optimal prior, $\rho^*(\rvs)$ is not hard to compute: it corresponds to sampling one of the skills $\rvz \sim p^*(\rvz)$ and then execute the corresponding policy, $\pi(\rva \mid \rvs, \rvz)$.

\begin{wraptable}[11]{r}{0.5\textwidth}
{\footnotesize
\caption{Notation for analysis.}\label{tab:notation}
\vspace{-1.5em}
\begin{tabular}{cp{5cm}}\\\toprule  
$r(\rvs)$ & Reward function. \\  \midrule
$\rho(\rvs)$ & Initialization. \\  \midrule
$\rho^+(\rvs)$ & Optimal state marginal distribution for reward $r(\rvs)$. Used to define regret.  \\ \midrule
$\rho^*(\rvs)$ & Optimal state marginal distribution after adaptation, which maximizes reward and minimizes information cost.  \\  \bottomrule
\end{tabular}}
\end{wraptable} 
The regularized regret objective, \textsc{AdaptationObjective}, can be interpreted in several different ways. The regularization can be motivated as preventing overfitting to a limited number of samples of a noisy reward function.
We could also interpret this term as reflecting the cost of adapting from to the new task, but under a somewhat idealized model of the learning process. Precisely, the adaptation objective is equivalent to performing one step of natural gradient descent~\citep{amari1998natural, kakade2001natural, martens2020new}, where the underlying metric is the state marginal distribution~\citep[Theorem 1]{raskutti2015information}. This natural gradient is different from the standard natural policy gradient~\citep{kakade2001natural}, which does not depend on the environment dynamics and is much easier to estimate.
This equivalence suggests that our notion of adaptation is idealized: it assumes that this adaptation step can be performed exactly, without considering how data is collected, how state marginal distributions are estimated, or how the underlying policy parameters should be updated. Nonetheless, this paper is (to the best of our knowledge) the first to provide any formal connection between the mutual information objective optimized by unsupervised skill learning algorithms and performance on downstream tasks.

\section{RL as Geometry on a Simplex}
\label{sec:prelims}

In order to prove our main result and further analyze the kinds of policies learned by unsupervised skill learning methods, we will first introduce a geometric perspective on RL. This perspective will allow us to better understand how unsupervised skill learning relates to the space of potentially optimal policies (for state-dependent reward functions).

We visualize policies by their discounted state occupancy measure, $|\gS|$-dimensional point lying on the probability simplex (see Fig.~\ref{fig:polytope}).
Note that this is a distribution over states, not actions, so a policy that chooses actions uniformly will not necessarily lie at the center of the simplex. Of course, not all state marginals are feasible, as the MDP dynamics might preclude certain distributions. We will be interested in where skill learning algorithms place skills on this probability simplex, and whether these skills are optimal for learning downstream tasks. Note that not all points on the simplex correspond to realizable policies, and multiple policies could map to the same state occupancy measure.

\subsection{Which State Marginals are Achievable?}
\label{sec:markov-process}

\begin{figure}[!t]
    \centering
    \vspace{-1em}
    \begin{subfigure}[t]{0.66\textwidth}
         \centering
         \includegraphics[width=0.45\textwidth]{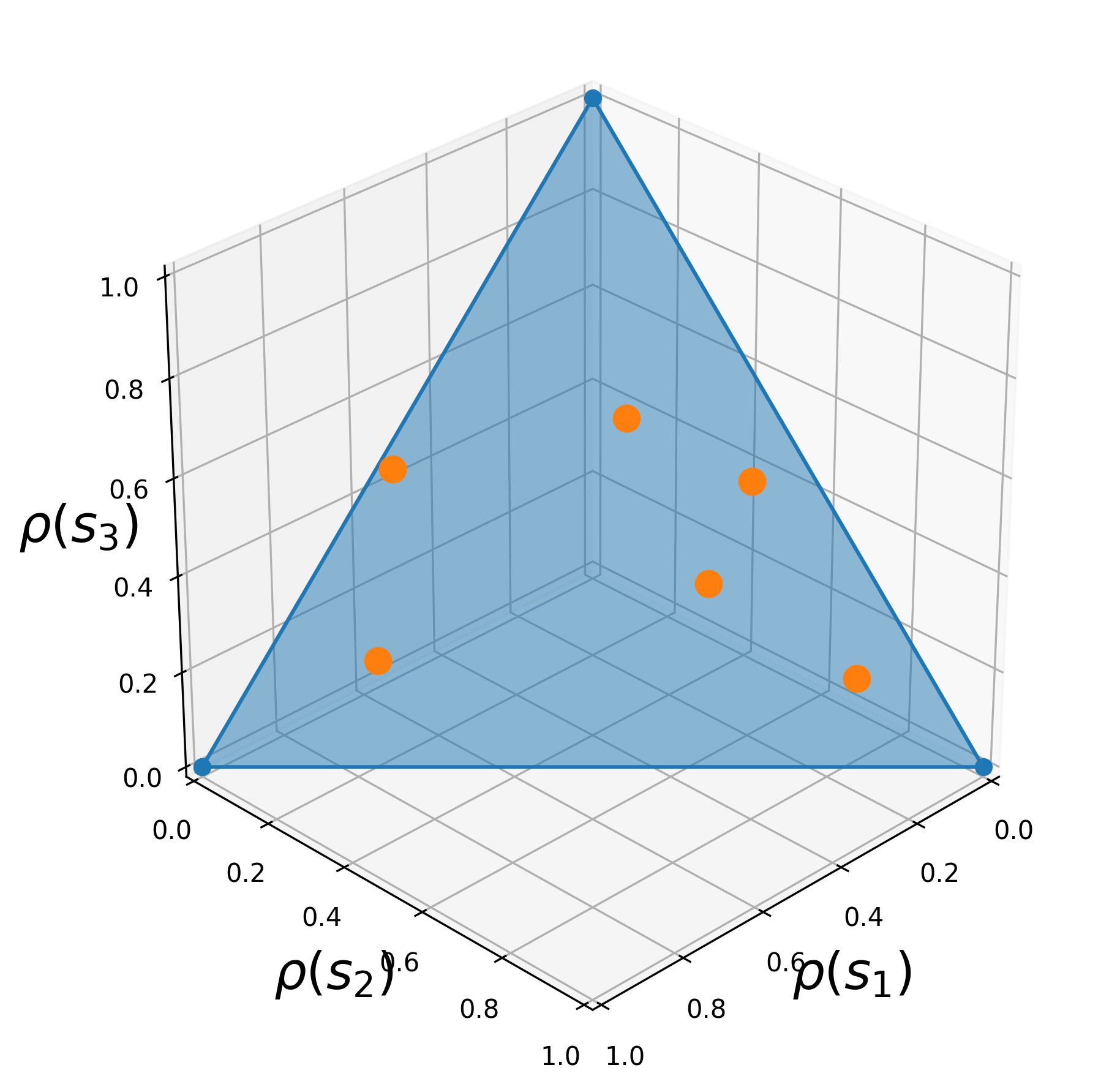}
         \includegraphics[width=0.45\textwidth]{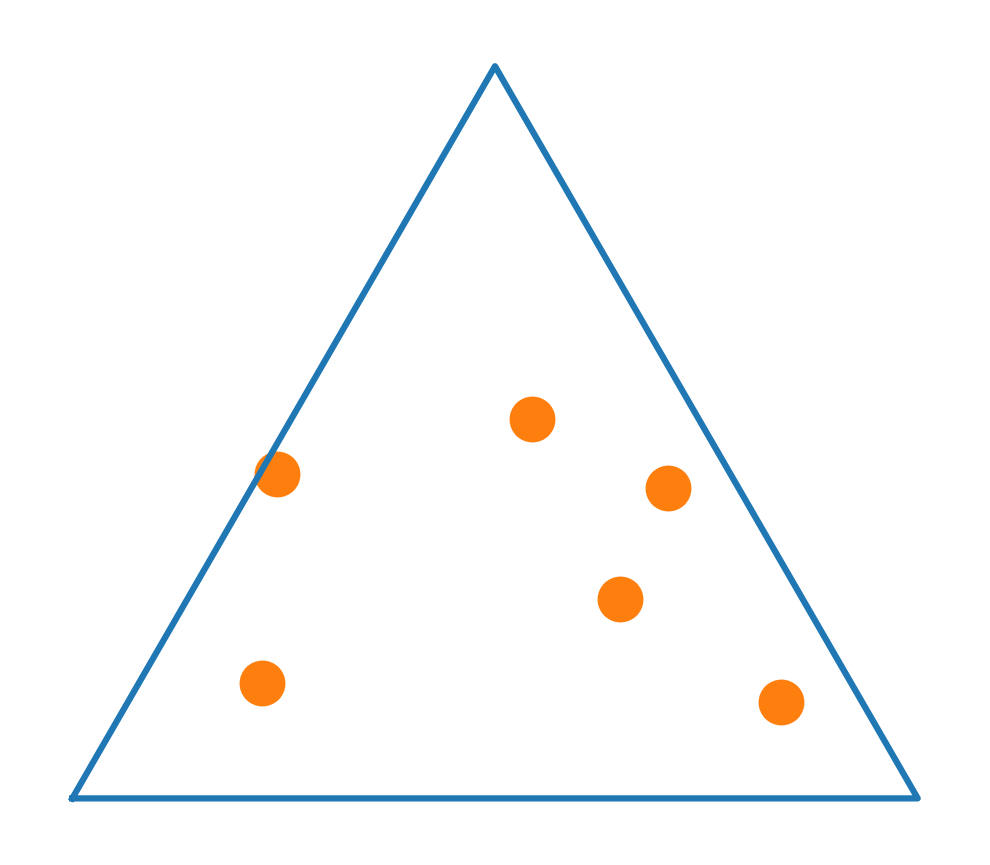}
         \label{fig:polytope-1}
    \end{subfigure}%
    ~
    \begin{subfigure}[t]{0.33\textwidth}
         \centering
         \includegraphics[width=\textwidth]{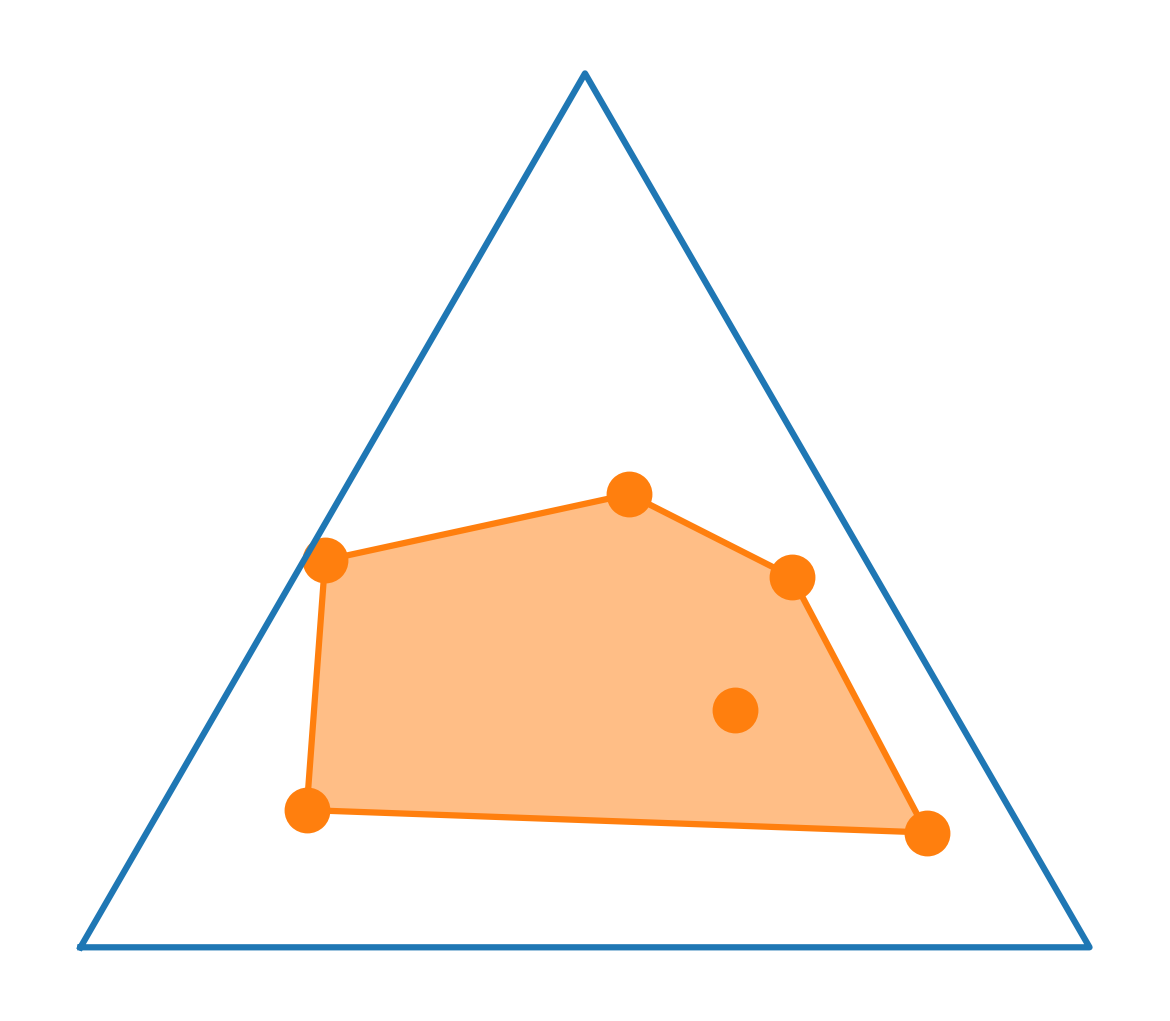}
     \end{subfigure}
    \caption{\footnotesize{\textbf{Polytope of achievable policies}: \figleft \; For a 3-state MDP, the set of feasible state marginal distributions is described by a triangle $[(1, 0, 0), (0, 1, 0), (0, 0, 1)]$ in $\mathbbm{R}^3$. We plot the state occupancy measure of 6 different policies.
    \figcenter\; We plot those same policies, but remove the coordinate axes to improve visual clarity.
    \figright \; For any convex combination of valid state occupancy measures, there exists a Markovian policy that has this state occupancy measure. Thus, policy search happens inside a convex polytope.}}
    \label{fig:polytope}
    \vspace{-1em}
\end{figure}

Understanding which state marginals are achievable is important for analyzing what unsupervised skill learning algorithms do, and how the learned skills relate to downstream \emph{state-dependent} reward functions.
While the set of achievable state marginal distributions can be described by a set of linear equations~\citep[Eq. 6.9.2]{puterman1990markov} (see Appendix~\ref{sec:def-constraints}), we offer a different and complementary description based on the following property~\citep[Theorem 2.8]{ziebart2010modeling}:
\begin{proposition} \label{prop:convex}
Given a set of feasible state marginals, any convex combination is also feasible.
\end{proposition}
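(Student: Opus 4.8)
The plan is to pass from state occupancy measures to \emph{state--action} occupancy measures, where feasibility is captured by a system of linear constraints (the Bellman flow equations), so that convexity becomes immediate, and then to project back down to states. First I would note why the obvious construction is insufficient: given $\rho_1 = \rho^{\pi_1}$ and $\rho_2 = \rho^{\pi_2}$ in $\gC$ and a weight $\lambda \in [0,1]$, the ``mixture'' policy that flips a $\lambda$-biased coin at time $0$ and then commits to $\pi_1$ or $\pi_2$ forever does produce the state occupancy $\lambda \rho_1 + (1-\lambda)\rho_2$, since its time-$t$ state distribution is $\lambda P_t^{\pi_1} + (1-\lambda) P_t^{\pi_2}$ at every $t$. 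However, this policy is \emph{not Markovian}: it must remember the coin flip. The content of the proposition is precisely that a \emph{stationary Markovian} policy achieves the same occupancy.

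To get a stationary policy, I would work with the state--action occupancy $\mu^\pi(\rvs,\rva) \triangleq (1-\gamma)\sum_{t=0}^\infty \gamma^t P_t^\pi(\rvs)\,\pi(\rva \mid \rvs)$, whose state marginal is $\sum_{\rva}\mu^\pi(\rvs,\rva) = \rho^\pi(\rvs)$. The key fact I would invoke is the standard equivalence between feasible $\mu$ and the Bellman flow polytope: $\mu^\pi$ is nonnegative and satisfies
\begin{equation*}
    \sum_{\rva}\mu(\rvs,\rva) = (1-\gamma)p_0(\rvs) + \gamma\sum_{\rvs',\rva'} p(\rvs \mid \rvs',\rva')\,\mu(\rvs',\rva'),
\end{equation*}
and conversely every nonnegative $\mu$ satisfying these equations is the state--action occupancy of the stationary policy $\pi_\mu(\rva \mid \rvs) = \mu(\rvs,\rva)/\sum_{\rva'}\mu(\rvs,\rva')$. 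Because these constraints are \emph{linear} in $\mu$, the set of feasible state--action occupancies is convex.

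With this in hand the argument is short. I would lift $\rho_1,\rho_2$ to $\mu_1 = \mu^{\pi_1}$, $\mu_2 = \mu^{\pi_2}$, form $\mu_\lambda = \lambda\mu_1 + (1-\lambda)\mu_2$, and observe that $\mu_\lambda$ is nonnegative and, by linearity, still satisfies the flow equations, hence is feasible. Reconstructing $\pi_{\mu_\lambda}$ then yields a stationary Markovian policy whose state marginal is $\sum_{\rva}\mu_\lambda(\rvs,\rva) = \lambda\rho_1(\rvs)+(1-\lambda)\rho_2(\rvs)$, so the convex combination lies in $\gC$; the general finite convex combination follows identically. The one technical point to dispatch is the definition of $\pi_{\mu_\lambda}$ at states $\rvs$ with $\sum_{\rva'}\mu_\lambda(\rvs,\rva')=0$: such states carry zero occupancy, so the action distribution there does not affect $\mu_\lambda$ and may be chosen arbitrarily (note $\rho(\rvs)\ge(1-\gamma)p_0(\rvs)$, so these states lie off the support of $p_0$). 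The main obstacle, and the step deserving the most care, is establishing the forward-and-backward equivalence between occupancy measures and the flow polytope---in particular verifying that the reconstructed stationary policy reproduces exactly the $\mu$ we started from, which is where stationarity (as opposed to mere feasibility of the marginals) is actually earned.
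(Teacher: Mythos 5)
Your proposal is correct and takes essentially the same route as the paper: the paper likewise lifts to state--action occupancy measures, mixes them linearly (``it is sufficient to ensure $\rho_\lambda(\rvs,\rva) = \lambda\rho_1(\rvs,\rva) + (1-\lambda)\rho_2(\rvs,\rva)$''), recovers the Markovian policy by Bayes' rule $\pi_\lambda(\rva \mid \rvs) = \rho_\lambda(\rvs,\rva)/\rho_\lambda(\rvs)$, and defers the occupancy-measure/stationary-policy equivalence to cited results (Ziebart, Thm.~2.8; Feinberg \& Shwartz, Thm.~6.1), which is the same standard fact you invoke via the Bellman-flow polytope and which the paper records as linear constraints in its appendix. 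The only cosmetic differences are that the paper writes the reconstructed policy explicitly as a state-dependent mixture $\lambda(\rvs)\pi_1(\rva\mid\rvs) + (1-\lambda(\rvs))\pi_2(\rva\mid\rvs)$ and handles general convex combinations by induction, while you additionally dispatch the zero-occupancy corner case.
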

This property is illustrated in Fig.~\ref{fig:polytope} (right): if we plot the state marginals for six policies, then any point inside the convex hull of those state marginals is also feasible. That is, there exists a (Markovian) policy that achieves that state marginal distribution.
In general, we can obtain the set of \emph{all} feasible state marginals by taking the convex hull of the state marginals of all deterministic policies. We will refer to this set of achievable state marginals as a \emph{polytope}. Every vertex of the state marginal polytope will \emph{contain} a deterministic policy,\footnote{Note, however, that some deterministic policies may not lie at vertices of this polytope.} and the state marginal of every policy can be represented as a convex combination of the state marginals at the vertices.

\begin{wrapfigure}[18]{r}{0.4\textwidth}
\vspace{-2em}
\hspace{-1em}
\includegraphics[width=1.1\linewidth]{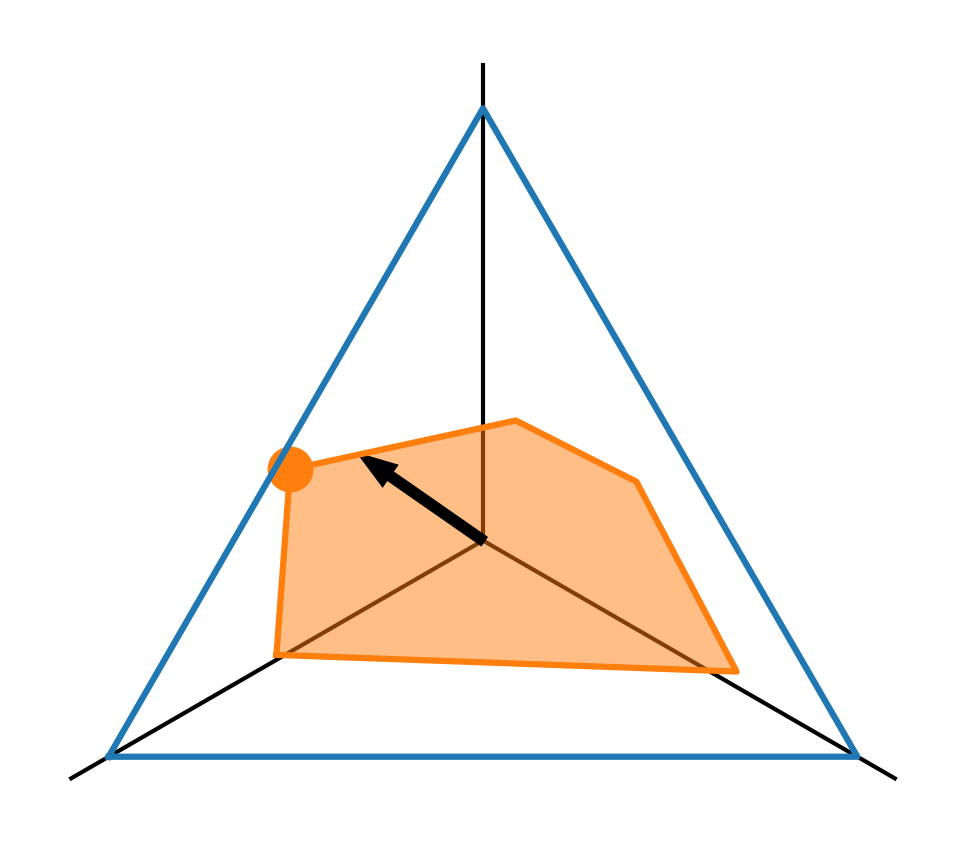}
\vspace{-2.5em}
\caption{\footnotesize{\textbf{Maximizing Rewards:} We visualize reward functions as vectors starting at the origin. Maximizing expected return corresponds to maximizing the dot product between the state marginal distribution and this reward vector.}}
\label{fig:polytope-with-rewards}
\end{wrapfigure}

\subsection{Which Policies are Optimal?}
\label{sec:mdp}

We now analyze where reward-maximizing policies lie within this state marginal polytope. Again taking a geometric perspective, we visualize reward functions as vectors starting from the origin, so that the expected return objective can be expressed as the inner product between the state marginal distribution and the reward vector (see Fig.~\ref{fig:polytope-with-rewards}).
This visualization helps illustrate two facts about the state marginal distributions of reward-maximizing policies:
\begin{fact} \label{fact:vertices-are-opt}
For every state-dependent reward function, among the set of policies that maximize that reward function is one that lies at a vertex of the state marginal polytope.
\end{fact}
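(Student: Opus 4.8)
The plan is to recognize that the expected-return objective is a \emph{linear} function of the state marginal, and then to invoke the elementary fact that a linear function on a polytope attains its maximum at a vertex. Concretely, for a fixed reward $r$ the objective is
\begin{equation*}
\E_{\rho^\pi(\rvs)}[r(\rvs)] = \sum_{\rvs \in \gS} \rho^\pi(\rvs)\, r(\rvs) = \langle \rho^\pi, r \rangle,
\end{equation*}
which is linear in the state marginal $\rho^\pi$. Maximizing expected return over all Markovian policies is therefore equivalent to maximizing this linear functional over the feasible set $\gC$ of state marginals. By Proposition~\ref{prop:convex} and the accompanying remark, $\gC$ is the convex hull of the state marginals of the deterministic policies; since a finite MDP has only finitely many deterministic policies ($|\gA|^{|\gS|}$ of them), $\gC$ is the convex hull of finitely many points, i.e.\ a compact convex polytope whose vertices form a subset of these deterministic marginals.

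Given this, I would argue as follows. Because $\langle \cdot, r\rangle$ is continuous and $\gC$ is compact, the maximum $v^\star \triangleq \max_{\rho \in \gC} \langle \rho, r\rangle$ is attained. Let $v_1, \dots, v_k$ denote the vertices of $\gC$ and let $\rho^\star$ be any maximizer, written as a convex combination $\rho^\star = \sum_i \lambda_i v_i$ with $\lambda_i \ge 0$ and $\sum_i \lambda_i = 1$. Linearity gives $v^\star = \langle \rho^\star, r\rangle = \sum_i \lambda_i \langle v_i, r\rangle \le \max_i \langle v_i, r\rangle \le v^\star$, so the inequalities are in fact equalities and some vertex $v_i$ with $\lambda_i > 0$ satisfies $\langle v_i, r\rangle = v^\star$. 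This vertex is itself an optimal state marginal, and by the remark following Proposition~\ref{prop:convex} it contains a deterministic policy; that policy maximizes the reward and lies at a vertex, proving the claim.

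The argument is a direct application of the fundamental theorem of linear programming, so the ``hard part'' is less the optimization step than making sure the two structural ingredients are in place. First, one must confirm that $\gC$ is genuinely a polytope with finitely many vertices rather than an arbitrary convex body; this is exactly what the finiteness of $\gS$ and $\gA$ buys us, via Proposition~\ref{prop:convex}. Second, it is worth stressing precisely what is \emph{not} claimed: when $r$ is orthogonal to an edge or higher-dimensional face of $\gC$, every point of that face is optimal, so not all reward-maximizing marginals are vertices. The statement only asserts that \emph{at least one} optimal policy sits at a vertex, which is exactly what the convex-combination argument delivers, and it is the weakest form that is both true and sufficient for the downstream results.
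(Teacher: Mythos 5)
Your proof is correct and takes essentially the same approach as the paper: the paper disposes of this fact by citing the maximum principle of convex analysis \citep[Chpt.~32]{rockafellar2015convex}, which is precisely the statement that a linear (indeed convex) objective on a compact convex set attains its maximum at an extreme point. You have simply unpacked that citation into its standard elementary proof---linearity of $\langle \rho, r\rangle$ over the polytope $\gC$ plus the convex-combination argument of the fundamental theorem of linear programming---together with the same structural ingredients (Proposition~\ref{prop:convex} and the finiteness of deterministic policies) that the paper uses to justify that $\gC$ is a polytope.
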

\begin{fact} \label{fact:exists-rewards}
For every vertex of the state marginal polytope, there exists a reward function for which that vertex corresponds to a reward-maximizing policy.
\end{fact}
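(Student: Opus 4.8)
The plan is to recognize Fact~\ref{fact:exists-rewards} as an instance of the standard convex-geometry fact that every vertex of a polytope is an \emph{exposed} point, i.e.\ the unique maximizer of some linear functional. Since $\gS$ and $\gA$ are finite, there are only finitely many deterministic policies, so $\gC$ is the convex hull of finitely many points (their state occupancy measures) and is therefore a bounded polytope with a finite vertex set $V$. Because the expected return $\E_{\rho(\rvs)}[r(\rvs)] = \sum_{\rvs} \rho(\rvs) r(\rvs)$ is linear in $\rho$, proving the claim amounts to exhibiting, for any chosen vertex $v \in V$, a vector $r \in \mathbb{R}^{|\gS|}$ whose inner product with $\rho$ is strictly maximized over $\gC$ at $\rho = v$.

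First I would fix a vertex $v$ and form $W \triangleq \mathrm{conv}(V \setminus \{v\})$, the convex hull of the remaining vertices. The defining property of a vertex is that it is an extreme point, so $v \notin W$; moreover $W$ is compact (a convex hull of finitely many points) and convex, while $\{v\}$ is compact and convex. The strict separating hyperplane theorem then yields a direction $r \in \mathbb{R}^{|\gS|}$ and a scalar $b$ with $\langle r, v\rangle > b > \langle r, w\rangle$ for every $w \in W$; in particular $\langle r, v\rangle > \langle r, u\rangle$ for every other vertex $u \in V \setminus \{v\}$.

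It then remains to upgrade this strict inequality over vertices to a strict inequality over all of $\gC$. Writing an arbitrary $\rho \in \gC$ as a convex combination $\rho = \sum_{u \in V} \lambda_u u$ and using linearity, $\langle r, \rho\rangle = \sum_u \lambda_u \langle r, u\rangle \le \langle r, v\rangle$, with equality only when all the mass is placed on $v$, i.e.\ $\rho = v$. Hence $v = \argmax_{\rho \in \gC} \E_{\rho(\rvs)}[r(\rvs)]$ is the unique reward-maximizing state marginal for the reward function $r(\rvs)$, and any (Markovian) policy realizing occupancy measure $v$ is reward-maximizing for $r$, as claimed.

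The one step that needs care — and the main obstacle — is the strict separation: it relies on $W$ being closed (so that a point outside it can be strictly, not merely weakly, separated), which is exactly why finiteness of $V$ is used, and on $v$ being a genuine extreme point rather than lying in the relative interior of a higher-dimensional face. Both hold because $\gC$ is a polytope; for a general compact convex body the analogous statement can fail (extreme points need not be exposed), so the finiteness afforded by the discrete MDP is essential. This argument is the converse companion to Fact~\ref{fact:vertices-are-opt}: where that fact places reward maximizers at vertices, this one shows every vertex is itself a reward maximizer.
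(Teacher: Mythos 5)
Your proof is correct and takes essentially the same route as the paper: the paper establishes Fact~\ref{fact:exists-rewards} simply by citing the strong separation theorem~\citep[Theorem~3.2.2]{hiriart2004fundamentals}, and your argument---strictly separating the vertex $v$ from $\mathrm{conv}(V \setminus \{v\})$ and then extending the strict inequality to all of $\gC$ by writing any point as a convex combination of vertices---is exactly that application, spelled out in full.
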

Fact~\ref{fact:vertices-are-opt} is an application of the maximum principle~\citep[Chpt.~32]{rockafellar2015convex}; Fact~\ref{fact:exists-rewards} is an application of the strong separation theorem~\citep[Theorem~3.2.2]{hiriart2004fundamentals}.
These observations suggest that, for the purpose of RL, it is sufficient to search among policies whose state marginals lie at vertices.\footnote{While it is well known that any reward function has a deterministic optimal policy~\citep[Sec.~4.4]{puterman1990markov}, our visualization adds nuance to the story, suggesting that searching in the space of \emph{all} deterministic policies can be wasteful, at least in the case of rewards that depend only on state.} 
These observations also suggests an objective for unsupervised skill learning:
\begin{definition}[Vertex discovery problem] \label{def:vertex-discovery}
Given a controlled Markov process (i.e., an MDP without a reward function), find the smallest set of policies such that every vertex of the state marginal polytope contains at least one policy.
\end{definition}
While we are unaware of any tractable algorithm for exactly solving this problem, the next section discusses how a prior skill learning algorithm provides an efficient \emph{approximation} to this problem.

\section{The Geometry of Skill Learning}
\label{sec:diayn}

\begin{figure}
    \centering
    \vspace{-2em}
    \begin{subfigure}[t]{0.66\linewidth}
         \centering
         \includegraphics[width=0.49\linewidth]{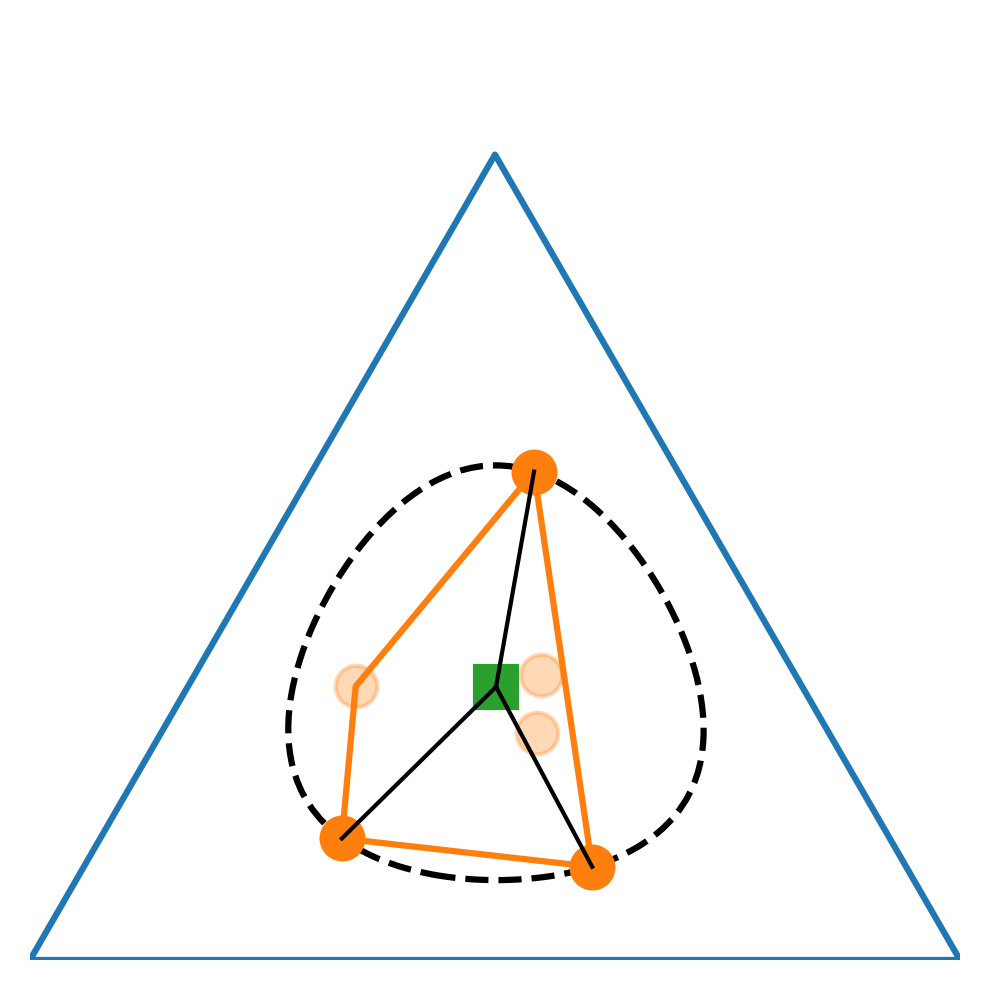}
         \includegraphics[width=0.49\linewidth]{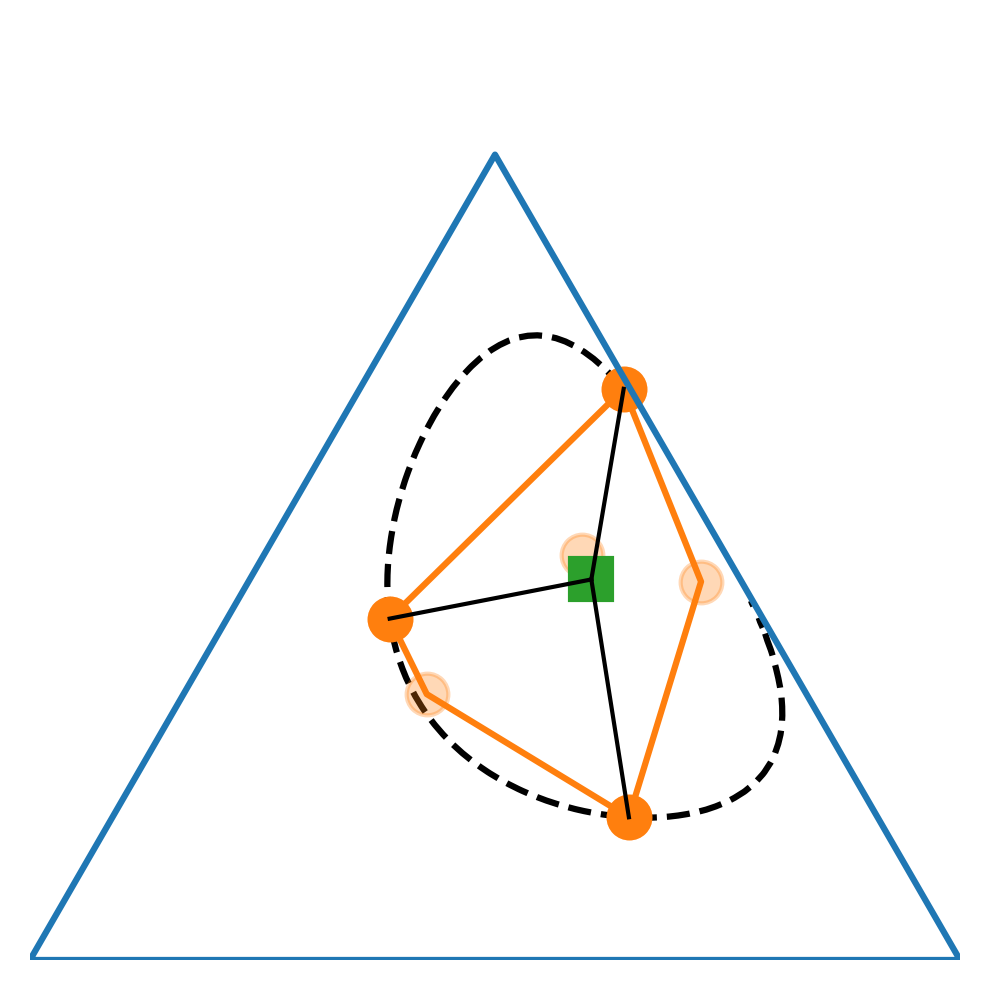}
         \caption{\footnotesize{Examples where MISL recovers $3 = |\gS|$ skills.}}
         \label{fig:diayn-3skill}
     \end{subfigure}
     \hfill
     \begin{subfigure}[t]{0.32\linewidth}
         \centering
         \includegraphics[width=\linewidth]{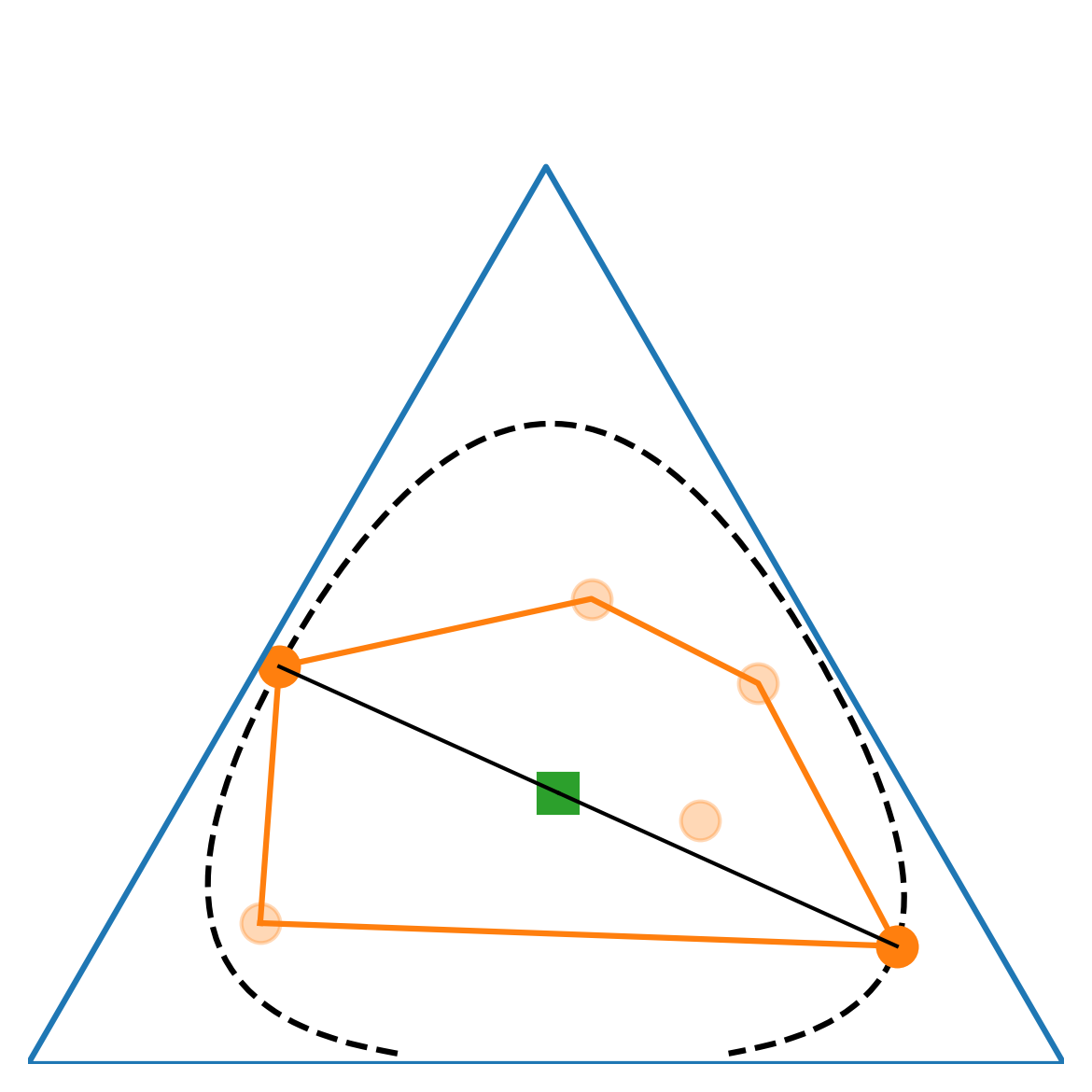}
         \caption{\footnotesize{Example where MISL learns only two unique skills.}}
         \label{fig:diayn-2skill}
     \end{subfigure}
     \caption{\footnotesize{\textbf{Information Geometry of Skill Learning}: Maximizing mutual information is equivalent to minimizing the \emph{maximum} divergence between a prior distribution over states $p(\rvs)$ (green square) and any achievable state marginal distribution (Lemma~\ref{lemma:mi-minimax}). The opaque circles are skills discovered by MISL, and the dashed line denotes state marginals that are equidistant from the green square.}}  %
    \label{fig:diayn}
    \vspace{-1em}
\end{figure}

We now analyze unsupervised skill discovery algorithms using the tools discussed in the previous section.
We view the skills learned by these methods as a set of points on the state marginal polytope.
We will show that the average state distribution of the skills provides an optimal initialization for solving downstream tasks using a certain adaptation procedure. However, we then show that the skills do not cover every vertex, and are thus insufficient for representing the solution to every downstream RL task.
Before presenting these results, we introduce a classic result from information geometry, which is the keystone to our analysis.

\subsection{Where do Skills Learned by Mutual Information Fall on the Polytope?}

We analyze where the skills learned by MISL lie on the state marginal polytope by dissecting the mutual information objective. The mutual information can be written as the average divergence between each policy's state distribution $\rho(\rvs \mid \rvz)$ and the average state distribution, $\rho(\rvs)$:
\begin{equation*}
    I(\rvs; \rvz) = \E_{p(\rvz)}\left[\kl{\rho(\rvs \mid \rvz)}{\rho(\rvs)} \right].
\end{equation*}
To optimize this mutual information, a skill learning algorithm would choose to assign higher probability to some policies and lower probability (perhaps zero) to other policies. In particular, this mutual information suggests that skill learning algorithms should assign higher probabilities to policies whose state distributions are further from the average state distribution. In fact, if one policy has a state distribution that is further to the average state distribution than another policy, then the skill learning algorithm should assign a probability of zero to this second policy. Thus, the net result of optimizing mutual information is to place probability only on policies that are as far away from their average as possible:
\begin{lemma} \label{lemma:skills-at-vertices}
The MISL skills must lie at vertices of the state marginal polytope, and thus are optimal for some reward functions.
\end{lemma}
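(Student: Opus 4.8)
The plan is to characterize the maximizer $p^*(\rvz)$ of the mutual information through its first-order optimality conditions, show that these conditions force every skill in the support to be a point of $\gC$ that is \emph{maximally} far (in KL) from the induced average state distribution, and then invoke \emph{strict} convexity of the KL divergence to conclude that such maximizers can only be vertices of the polytope. The final clause of the lemma then follows immediately from Fact~\ref{fact:exists-rewards}.

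First I would fix the optimal prior and write $\rho^*(\rvs) \triangleq \E_{p^*(\rvz)}[\rho(\rvs \mid \rvz)]$ for the induced average state marginal. Using the identity $I(\rvs;\rvz) = H(\rho^*) - \E_{p(\rvz)}[H(\rho(\rvs\mid\rvz))]$, a direct computation of the derivative with respect to a single coordinate gives $\partial I / \partial p(\rvz) = \kl{\rho(\rvs\mid\rvz)}{\rho^*(\rvs)} - 1$. Applying the KKT conditions for maximization over the probability simplex $\{p(\rvz) : p(\rvz)\ge 0,\ \sum_\rvz p(\rvz) = 1\}$ yields a constant $C$ such that $\kl{\rho(\rvs\mid\rvz)}{\rho^*(\rvs)} = C$ for every $\rvz$ in the support of $p^*$ and $\kl{\rho(\rvs\mid\rvz)}{\rho^*(\rvs)} \le C$ otherwise. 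Because the abstract model in Eq.~\ref{eq:mi} lets $\rvz$ range over \emph{all} policies, i.e. over every point of $\gC$, this says precisely that each support skill attains $C = \max_{\rho(\rvs)\in\gC} \kl{\rho(\rvs)}{\rho^*(\rvs)}$. This is exactly the minimax (redundancy--capacity) structure captured by Lemma~\ref{lemma:mi-minimax}, which I could alternatively invoke directly in place of the explicit derivative calculation.

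The second step is the geometric one. The map $\rho(\rvs) \mapsto \kl{\rho(\rvs)}{\rho^*(\rvs)}$ is strictly convex in its first argument, since $x\log x$ is strictly convex on $[0,\infty)$ and the remaining $-\sum_\rvs \rho(\rvs)\log\rho^*(\rvs)$ term is linear. A strictly convex function cannot attain its maximum over a polytope at any strict convex combination of distinct vertices: if $\rho = \sum_i \lambda_i v_i$ with at least two positive weights on distinct vertices $v_i$, then strict Jensen gives $\kl{\rho}{\rho^*} < \sum_i \lambda_i \kl{v_i}{\rho^*} \le \max_i \kl{v_i}{\rho^*}$, so $\rho$ is not a maximizer. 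Hence every maximizer of $\kl{\cdot}{\rho^*(\rvs)}$ over $\gC$ — and in particular every support skill from the first step — must be a vertex. Note that strict convexity is what upgrades the weak ``there exists a maximizer at a vertex'' of the maximum principle (cf. Fact~\ref{fact:vertices-are-opt}) to the ``must lie at a vertex'' required here. Applying Fact~\ref{fact:exists-rewards} to each such vertex then supplies a reward function it maximizes, completing the argument.

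The main obstacle I anticipate is handling degeneracies at the boundary of the simplex so that the divergences are well defined and the strict-convexity argument is valid: I would need $\rho^*(\rvs) > 0$ on the relevant states so that $\kl{\rho(\rvs)}{\rho^*(\rvs)}$ is finite for every $\rho(\rvs) \in \gC$; this holds automatically on the support of the mixture, to which the KL sum can be restricted. A secondary subtlety is justifying that optimizing the single distribution $p(\rvz)$ over the full index set of policies is equivalent to the joint optimization over both the weights and the conditionals $\rho(\rvs\mid\rvz)$ — it is this reformulation that makes the maximum in the KKT condition range over all of $\gC$ rather than over a fixed finite set, and it is precisely where the abstract model of Eq.~\ref{eq:mi} does the work.
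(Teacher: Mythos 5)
Your proof is correct, and it takes a genuinely different route from the paper's. The paper's entire proof is a one-sentence perturbation argument: if $p(\rvz) > 0$ for a skill whose state marginal is not a vertex, the mutual information could be strictly improved by shifting that probability mass onto a vertex; no optimality characterization of $p(\rvz)$ is ever invoked. You instead first derive the global KKT characterization --- every support skill is maximally far (in KL) from the induced mixture $\rho^*(\rvs)$ --- which is exactly the content of the paper's separate equidistance lemma in Sec.~\ref{sec:num-skills} (proved there by the same KKT computation, but used there only for counting skills), and you then finish with a strict-convexity argument: $\rho \mapsto \kl{\rho}{\rho^*}$ is strictly convex, so its maximizers over the polytope can only be extreme points. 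Your route buys rigor that the paper's sketch lacks: the paper never says which vertex the mass should be shifted to, nor why the improvement is strict, and making that precise requires essentially your strict-Jensen step (decompose the interior skill as $\sum_i \lambda_i v_i$, split its mass among those vertices so the mixture is unchanged, and observe that the average divergence strictly increases). Your argument also correctly delivers ``must lie at a vertex'' for \emph{all} support skills, rather than the weaker ``some maximizer lies at a vertex'' of the maximum principle (Fact~\ref{fact:vertices-are-opt}), and it unifies the lemma with Lemma~\ref{lemma:mi-minimax} and with the counting argument of Lemma~\ref{lemma:num-skills}. What the paper's route buys is brevity and independence from the KKT/capacity machinery. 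Your handling of the boundary degeneracy is also sound: support skills automatically have finite divergence to the mixture, since $\rho^*(\rvs) \ge p(\rvz)\,\rho(\rvs \mid \rvz)$ on their support; both routes then conclude identically via Fact~\ref{fact:exists-rewards}.
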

This result, which we confirm experimentally\footnote{Code to reproduce: {\scriptsize \url{https://github.com/ben-eysenbach/info_geometry/blob/main/experiments.ipynb}}} in Fig.~\ref{fig:diayn}, answers the question of where the skills learned by mutual information fall on the polytope.
Recalling that the vertices of the state marginal polytope correspond to optimal policies (Fact~\ref{fact:vertices-are-opt}), this result implies that the skills learned by MISL are a subset of the set of optimal policies. Thus,
\begin{corollary} \label{corollary:skills-are-opt}
MISL skills are all optimal for some downstream state-dependent reward functions.
\end{corollary}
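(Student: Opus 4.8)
The plan is to obtain Corollary~\ref{corollary:skills-are-opt} as an immediate consequence of chaining Lemma~\ref{lemma:skills-at-vertices} with Fact~\ref{fact:exists-rewards}; no new machinery is required. In fact the corollary simply isolates and makes explicit the final clause of Lemma~\ref{lemma:skills-at-vertices} (``and thus are optimal for some reward functions''). The conceptual content is a composition: Lemma~\ref{lemma:skills-at-vertices} locates each learned skill geometrically at a vertex of the polytope $\gC$, and Fact~\ref{fact:exists-rewards} converts that purely geometric statement into the existence of a certifying reward function.

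Concretely, I would first fix an arbitrary skill, i.e.\ any $\rvz$ in the support of the mutual-information-maximizing distribution $p^*(\rvz)$, and let $\rho(\rvs \mid \rvz)$ denote the state marginal of the corresponding policy $\pi(\rva \mid \rvs, \rvz)$. Second, I would invoke Lemma~\ref{lemma:skills-at-vertices} to conclude that $\rho(\rvs \mid \rvz)$ is a vertex of the state marginal polytope $\gC$. Third, I would apply Fact~\ref{fact:exists-rewards} to that vertex to produce a state-dependent reward function $r_\rvz(\rvs)$ for which that vertex is the reward-maximizing state marginal over $\gC$. Finally, since the RL objective equals the inner product $\E_{\rho^\pi(\rvs)}[r(\rvs)]$ between the state marginal and the reward vector (Section~\ref{sec:prelims}), having the maximizing state marginal is exactly the statement that $\pi(\rva \mid \rvs, \rvz)$ is optimal for $r_\rvz$. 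As $\rvz$ was arbitrary, every learned skill is optimal for some state-dependent reward, which is the claim.

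There is essentially no obstacle at the level of the corollary itself: all of the genuine difficulty resides in Lemma~\ref{lemma:skills-at-vertices}, whose proof must argue---via the divergence-from-average decomposition of $I(\rvs;\rvz)$ and the facility-location/minimax interpretation---that maximizing mutual information places probability mass only on the extreme points of $\gC$. The sole point requiring a moment of care in the corollary is the identification of ``lies at a vertex'' with ``is reward-optimal,'' and this is precisely what Fact~\ref{fact:exists-rewards} supplies through the strong separation theorem: a supporting hyperplane at the vertex furnishes the reward direction $r_\rvz$, and strict separation ensures the vertex is the \emph{unique} maximizer, so the skill is genuinely optimal rather than merely tied. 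No assumption beyond what Lemma~\ref{lemma:skills-at-vertices} and Fact~\ref{fact:exists-rewards} already guarantee is needed.
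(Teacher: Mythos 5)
Your proposal is correct and matches the paper's own argument: the corollary is obtained exactly by chaining Lemma~\ref{lemma:skills-at-vertices} (skills lie at vertices) with the existence of a certifying reward for each vertex. If anything, you are slightly more careful than the paper, which in its prose cites Fact~\ref{fact:vertices-are-opt} at this step even though the direction actually needed (vertex $\implies$ optimal for some reward, via strong separation) is Fact~\ref{fact:exists-rewards}, precisely the one you invoke.
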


\subsection{How Many Unique Skills are Learned?}
\label{sec:num-skills}

A simple way to use skills acquired during pretraining to solve a downstream RL task is to simply search among the skills to find the one that achieves the highest reward. So, it is natural to ask whether skill learning algorithms learn skills that are optimal for every possible reward function. Indeed, this is a very natural notion of what it might mean for a skill learning algorithm to be optimal. Prior work motivates the use of larger dimensional $\rvz$ or continuous-valued $\rvz$, arguing that these design decisions will allow the method to learn a larger (perhaps infinite) number of skills~\citep{eysenbach2018diversity, hausman2018learning}, based on the intuition that these skills may be optimal for solving downstream tasks. However, it has remained unclear whether such design decisions are useful and whether the goal of acquiring an infinite or exponential number of skills is even possible. Among prior skill learning methods, we are unaware of prior work that characterizes the \emph{number} of skills that these methods will learn. In this section we show that existing skill learning algorithms based on mutual information are not optimal in this sense. Our proof will follow a simple counting argument based on the number of unique skills.

To count the number of unique skills, we start by analyzing the distance between each skill and the state distribution averaged over skills, measured as a KL divergence between state marginals. We can use the KKT conditions to prove that all skills with non-zero support (i.e., for which $p(\rvz) > 0$) have an equal divergence from the average state marginal:
\begin{lemma}
Let $\rho(\rvs \mid \rvz)$ be given. Let $p(\rvz)$ be the solution to maximizing mutual information (Eq.~\ref{eq:mi}) and let $\rho(\rvs) = \E_{p(\rvz)}[\rho(\rvs \mid z)]$ be the average state marginal. Then the following holds:
\begin{equation*}
    p(\rvz) > 0 \implies \KL(\rho(\rvs \mid \rvz) \| \rho(\rvs)) = \max_{\rvz^*} \KL(\rho(\rvs \mid \rvz^*) \| \rho(\rvs)) \triangleq d_\text{max}.
\end{equation*}
\end{lemma}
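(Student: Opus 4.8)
The plan is to recognize the optimization in Eq.~\ref{eq:mi} — maximizing $I(\rvs;\rvz)$ over the distribution $p(\rvz)$ while the conditional $\rho(\rvs \mid \rvz)$ is held fixed — as exactly the Shannon channel-capacity problem, where $\rho(\rvs \mid \rvz)$ plays the role of a fixed channel and $p(\rvz)$ is the input distribution. The claimed equalization of divergences is then the classical Karush--Kuhn--Tucker (KKT) characterization of a capacity-achieving input distribution. So first I would recall that $I(\rvs;\rvz)$, viewed as a function of $p(\rvz)$ with the channel fixed, is concave; this is what makes the first-order KKT conditions both necessary and \emph{sufficient} for the maximizer $p^*(\rvz)$, so that verifying stationarity pins down the solution rather than merely a critical point.

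Next I would set up the Lagrangian for the constrained problem $\max_{p(\rvz)} I(\rvs;\rvz)$ subject to $\sum_\rvz p(\rvz) = 1$ and $p(\rvz) \ge 0$, introducing a multiplier $\lambda$ for the normalization constraint and multipliers $\mu_\rvz \ge 0$ for the nonnegativity constraints, with complementary slackness $\mu_\rvz\, p(\rvz) = 0$. The crux is computing $\partial I / \partial p(\rvz)$. Writing $I = H(\rvs) - H(\rvs \mid \rvz)$ and using $\partial \rho(\rvs)/\partial p(\rvz) = \rho(\rvs \mid \rvz)$ together with $\sum_\rvs \rho(\rvs \mid \rvz) = 1$, the entropy terms telescope to the clean expression
\[
    \frac{\partial I}{\partial p(\rvz)} = \KL(\rho(\rvs \mid \rvz) \| \rho(\rvs)) - 1.
\]

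The step requiring the most care — the main obstacle — is this derivative: because $\rho(\rvs) = \E_{p(\rvz)}[\rho(\rvs \mid \rvz)]$ itself depends on $p(\rvz)$, one must track the implicit dependence, and the apparent extra term $\sum_\rvs \rho(\rvs \mid \rvz)(\log \rho(\rvs) + 1)$ arising from differentiating $H(\rvs)$ is precisely what cancels against the matching term inside the divergence, leaving only the constant $-1$. Once this is established, I would substitute into the stationarity condition to obtain $\KL(\rho(\rvs \mid \rvz) \| \rho(\rvs)) = 1 + \lambda - \mu_\rvz$. For any skill with $p(\rvz) > 0$, complementary slackness forces $\mu_\rvz = 0$, so its divergence equals the common constant $1 + \lambda$; for skills with $p(\rvz) = 0$, the inequality $\mu_\rvz \ge 0$ yields a divergence of at most $1 + \lambda$. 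Hence the maximum $d_\text{max} = \max_{\rvz^*}\KL(\rho(\rvs \mid \rvz^*) \| \rho(\rvs))$ is attained exactly at the supported skills and equals $1 + \lambda$, which is the claim. I would close by noting that this common value $d_\text{max}$ is the optimal mutual information itself, connecting the result back to the minimax view of Lemma~\ref{lemma:mi-minimax} and the equidistance picture (the dashed equidistant locus) in Fig.~\ref{fig:diayn}.
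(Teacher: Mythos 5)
Your proposal is correct and takes essentially the same route as the paper: the paper proves this lemma precisely by invoking the KKT conditions for the concave problem $\max_{p(\rvz)} I(\rvs;\rvz)$ with the ``channel'' $\rho(\rvs \mid \rvz)$ held fixed, i.e., the classical capacity-achieving-input characterization you spell out. Your derivative computation $\partial I/\partial p(\rvz) = \KL(\rho(\rvs \mid \rvz) \| \rho(\rvs)) - 1$ and the complementary-slackness step (equality at supported skills, inequality at unsupported ones) are exactly the details the paper leaves implicit, and your closing observation that the common value equals the optimal mutual information matches the paper's connection to Lemma~\ref{lemma:mi-minimax}.
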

We visualize this result in Fig.~\ref{fig:diayn}: the dashed line denotes a state marginals $\rho(\rvs \mid \rvz)$ that are equidistant from the average state marginal (green square). Note that all skills where MISL places non-zero probability mass (denoted by solid orange dots instead of transparent orange dots) lie on this dashed circle.
This Lemma allows us to count how many unique skills MISL acquires, under a certain regularity condition.
\begin{lemma} \label{lemma:num-skills}
Assume that the vertices of the state marginal polytope are not concyclic, as defined using the KL divergence. Then MISL will recover at most $|\gS|$ distinct skills.
\end{lemma}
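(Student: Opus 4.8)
The plan is to combine the two structural facts already in hand---that every support skill sits at a vertex of the state marginal polytope (Lemma~\ref{lemma:skills-at-vertices}), and that every support skill is at the same KL distance $d_\text{max}$ from the average marginal (the preceding lemma)---and then run a dimension-counting argument on the set of vertices that can satisfy both constraints simultaneously. Writing $\rho = \rho(\rvs)$ for the fixed optimal average marginal, the support skills all lie in the intersection $V \cap S$, where $V$ is the vertex set of the polytope and $S = \{q : \KL(q \| \rho) = d_\text{max}\}$ is the ``KL sphere'' of radius $d_\text{max}$ centered at $\rho$. The whole lemma then reduces to bounding $|V \cap S|$, i.e.\ to asking how many vertices can be concyclic in the KL geometry.

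The step I expect to do the real work is to \emph{linearize} the sphere condition. Although $\KL(q \| \rho)$ is not linear in $q$, the constraint $\KL(v \| \rho) = d$ \emph{is} affine once the center and radius are treated as the unknowns. Setting $\theta_s = \log \rho_s$ and expanding,
\begin{equation*}
\KL(v \| \rho) = \sum_{s} v_s \log v_s - \sum_s v_s \theta_s = d
\quad\Longleftrightarrow\quad
\langle v, \theta\rangle + d = c(v), \qquad c(v) \triangleq \sum_s v_s \log v_s .
\end{equation*}
Thus, lifting each vertex $v$ to the point $\phi(v) = (v, c(v)) \in \R^{|\gS|+1}$, the statement ``$v$ lies on the KL sphere $S$'' becomes ``$\phi(v)$ lies on the hyperplane with normal $(\theta, -1)$ and constant term $d$.'' In other words, a collection of vertices is concyclic in the KL sense exactly when their lifts are coplanar in $\R^{|\gS|+1}$, which is how I would make the KL-concyclicity notion precise.

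With this reformulation the count is immediate. Because every $v$ satisfies $\sum_s v_s = 1$, all lifted points $\phi(v)$ lie in the $|\gS|$-dimensional affine subspace $A = \{(x,y) : \sum_s x_s = 1\}$. A hyperplane of $\R^{|\gS|+1}$ meets $A$ in an affine subspace of dimension $|\gS|-1$, which is spanned by $|\gS|$ affinely independent points; hence, unless some $|\gS|+1$ of the lifted vertices happen to be coplanar, at most $|\gS|$ vertices can lie on any single KL sphere. This is exactly the ``not concyclic'' hypothesis read in the KL geometry: no $|\gS|+1$ vertices have coplanar lifts. Applying it to $S$ bounds the number of distinct support skills by $|\gS|$, which is the claim.

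The main obstacle is conceptual rather than computational: I must phrase the KL-concyclicity condition so the lemma is neither vacuous nor circular, and verify that the log-coordinate lift $\phi$ genuinely converts the nonlinear equidistance constraint into coplanarity, including checking that $\phi$ is injective so that distinct vertices yield distinct skills and distinct lifted points. A secondary subtlety is the boundary behavior of $\theta_s = \log \rho_s$ when $\rho$ assigns zero probability to some state; I would restrict attention to the support of $\rho$, noting that $\rho = \E_{p(\rvz)}[\rho(\rvs \mid \rvz)]$ has positive mass on every state reachable by a support skill, so the substitution is well defined on the coordinates that matter.
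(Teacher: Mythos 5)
Your proof has the same skeleton as the paper's: support skills lie at vertices (Lemma~\ref{lemma:skills-at-vertices}), all support skills sit at the common KL distance $d_\text{max}$ from the average marginal (the unnumbered equidistance lemma), and the bound reduces to counting how many vertices can be KL-equidistant from a single point. Where the paper does this count by an informal degrees-of-freedom argument (the center carries $|\gS|-1$ degrees of freedom, the radius one more, and each support skill imposes one constraint), you linearize the sphere condition via the lift $\phi(v)=(v, \sum_s v_s\log v_s)$. That step is correct, and it is in fact the right way to make the paper's heuristic rigorous: $\KL(v \| \rho)=d$ holds iff $\phi(v)$ lies on the hyperplane $\{(x,y): \langle x,\theta\rangle - y = -d\}$ with $\theta = \log\rho$, so KL spheres lift exactly to hyperplanes whose normal has the form $(\theta,-1)$.

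The gap is in the final translation of the hypothesis. You define concyclicity as ``the lifts are coplanar in $\R^{|\gS|+1}$'' and read the assumption as ``no $|\gS|+1$ lifted vertices are coplanar.'' That condition is unsatisfiable at precisely the threshold you need: any $|\gS|+1$ points of $\R^{|\gS|+1}$ affinely span a subspace of dimension at most $|\gS|$, and hence always lie on a common hyperplane. So whenever the polytope has more than $|\gS|$ vertices --- the only case in which the lemma has content --- your hypothesis is false, and the statement you have proved is vacuous; the ``exactly when'' in your equivalence between KL-concyclicity and coplanarity fails in the backward direction. The repair is to use the structure your own computation delivers: only hyperplanes with normal of the form $(\theta,-1)$ (non-vertical ones; the shift freedom $\theta \mapsto \theta + c\mathbf{1}$ on the slice $A$ lets you normalize $\sum_s e^{\theta_s}=1$) correspond to genuine KL spheres with a valid center. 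Define $|\gS|+1$ vertices to be concyclic when they lie on a common KL sphere --- equivalently, when their lifts lie on a common \emph{non-vertical} hyperplane --- and the argument closes correctly: the support skills all lie on the sphere of radius $d_\text{max}$ about the MISL average marginal, so $|\gS|+1$ distinct skills would exhibit $|\gS|+1$ concyclic vertices, contradicting the hypothesis. With this definition, your dimension count of $H\cap A$ no longer does the bounding itself (the hypothesis does); its proper role is to show that $|\gS|+1$ is the right threshold, i.e., that $|\gS|$ vertices in general position do lie on a common KL sphere, so non-concyclicity is a genericity assumption rather than a substantive restriction.
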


\begin{proof}
The proof follows a simple counting argument. Without loss of generality, we only consider vertices of the state marginal polytope as candidate skills (Lemma~\ref{lemma:skills-at-vertices}). The location of the average state marginal ($\rho(\rvs)$) on the probability simplex is determined by the location of the skills and the distance to the skills. The average state marginal is a vector in $|\gS|$-dimensional space with $|\gS| - 1$ degrees of freedom; we subtract one to account for the constraint that the state marginal sum to 1.

Consider adding skills one by one. When we add a new skill (i.e., set $p(\rvz) > 0$ for some $\rvz$), we add an additional constraint to the location of this average state marginal; namely, that \mbox{$\kl{\rho(\rvs \mid \rvz)}{p(\rvz)} = d_\text{max}$}. The value of $d_\text{max}$ adds one additional degree of freedom. Thus, after adding $(|\gS| - 1) + 1$ skills, the average state marginal is fully specified. Adding additional skills would make the average state marginal ill-defined, as it would have to violate one of the distance constraints. Thus, we can have at most $|\gS|$ unique skills.
\end{proof}

MISL need not learn exactly $|\gS|$ skills. In some MDPs, such as the one illustrated in Fig.~\ref{fig:diayn-2skill}, MISL acquires only 2 unique skills for a 3-state MDP. If an additional skill (at a vertex) were added, then the average distance from the skills to the average state marginal would decrease, worsening the mutual information objective.
Additionally, when the regularity assumption is violated, there could be more than $|\gS|$ skills that all have a KL divergence of $d_\text{max}$ from the initialization $\rho(s)$, in which case MISL could learn more than $|\gS|$ skills.

To summarize, our analysis shows that MISL only acquires optimal policies (i.e., those at vertices of the state marginal polytope. However, the fact that there can be up to $|\gA|^{|\gS|}$
vertices (see Appendix~\ref{appendix:all-det-opt}) and the fact that MISL acquires at most $|\gS|$ skills leads us to conclude that MISL does not solve the vertex discovery problem:
\begin{theorem} \label{lemma:skills-not-opt}
In general, skill learning algorithms that maximize the mutual information $I(\rvs; \rvz)$ will not recover the set of all optimal policies.
\end{theorem}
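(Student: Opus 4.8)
The plan is to reduce the statement to a counting argument that pits the upper bound on the number of MISL skills against the number of vertices of the state marginal polytope. First I would make precise what it means to \emph{recover the set of all optimal policies}. By Fact~\ref{fact:vertices-are-opt}, for any state-dependent reward some reward-maximizing policy sits at a vertex, and by Fact~\ref{fact:exists-rewards} every vertex is in turn the reward-maximizer for some reward; moreover the reward furnished by the strong separation theorem strictly separates that vertex from the rest, so it is the \emph{unique} optimum. Hence the set of state marginals that are optimal for at least one reward is exactly the vertex set of the polytope, and containing an optimal policy for every reward forces a skill at every vertex. Recovering all optimal policies therefore amounts to solving the vertex discovery problem (Definition~\ref{def:vertex-discovery}).

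Next I would invoke the two bounds already in hand. On one side, Lemma~\ref{lemma:num-skills} shows that, under the non-concyclicity regularity condition, MISL assigns non-zero probability to at most $|\gS|$ distinct skills, and by Lemma~\ref{lemma:skills-at-vertices} these lie at vertices. On the other side, the construction in Appendix~\ref{appendix:all-det-opt} shows the polytope can have as many as $|\gA|^{|\gS|}$ vertices. The crux is that these counts are incommensurate: whenever $|\gA| \ge 2$ and $|\gS| \ge 3$ we have $|\gA|^{|\gS|} > |\gS|$, so at most $|\gS|$ skills cannot simultaneously occupy strictly more than $|\gS|$ vertices.

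To turn this into a clean proof of the ``in general'' claim I would exhibit a single witnessing MDP rather than merely comparing worst-case bounds. Concretely, I would take a controlled Markov process with $|\gS| = 3$, $|\gA| = 2$ whose state marginal polytope is a polygon with strictly more than three vertices in general position, so that the vertices are not concyclic in the KL sense and Lemma~\ref{lemma:num-skills} caps MISL at three skills. Then at least one vertex is uncovered; by Fact~\ref{fact:exists-rewards} there is a reward function whose unique optimal policy is that vertex, and no learned skill is optimal for it, establishing the claim.

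The main obstacle I anticipate is the existence half of this argument, not the counting half. The inequality $|\gA|^{|\gS|} > |\gS|$ is immediate, but it only compares worst-case quantities; a faithful proof must produce an actual MDP that simultaneously (i) realizes a vertex count exceeding $|\gS|$ and (ii) satisfies the non-concyclicity condition of Lemma~\ref{lemma:num-skills}, so that the skill bound genuinely applies. Verifying that the vertices of a concrete MDP are in general position is the fiddly step; I would argue it holds generically, since concyclic configurations form a measure-zero subset of dynamics, and I would note that even if the regularity assumption fails, MISL can only place mass on the $d_\text{max}$-equidistant vertices, which still form a proper subset of all vertices whenever the polytope has more vertices than can lie on a single KL-sphere. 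Either way some optimal policy is missed.
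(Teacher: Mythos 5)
Your proposal is correct and takes essentially the same route as the paper: the paper proves the theorem by the very counting argument you give, pitting the at-most-$|\gS|$ bound of Lemma~\ref{lemma:num-skills} (with skills at vertices by Lemma~\ref{lemma:skills-at-vertices}) against the fact that the polytope can have up to $|\gA|^{|\gS|}$ vertices, each of which is optimal for some reward by Fact~\ref{fact:exists-rewards}. Your refinements---the strict-separation argument making the missed vertex a \emph{unique} optimum, the explicit witness MDP, and the discussion of the non-concyclicity caveat---merely make rigorous the ``in general'' step that the paper leaves informal.
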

This result is important for determining how to use the output of an unsupervised skill learning algorithm to solve downstream tasks.
One approach to adaptation is to find whichever skill receives the highest return. This result indicates that this approach will produce suboptimal performance on a large number of reward functions. In fact, for every reward maximizing policy which a skill learning algorithm can represent, there can be an exponential number that the skill learning algorithm can fail to represent. This result casts doubt on the use of the skills themselves for solving downstream tasks, at least those skills learned by the existing generation of skill learning algorithms.

Might other simple algorithms for skill discovery recover the entire set of optimal policies? We are currently unaware of any algorithm that does solve the vertex discovery problem. In Appendix~\ref{appendix:alternative-strategies}, we show that \emph{(1)} including actions in the mutual information objective or \emph{(2)} maximizing a collection of indicator reward functions also fail to discover all optimal policies.

Unsupervised skill learning algorithms learn a full distribution over skills. Instead of just looking at the set of policies that are given non-zero probability (i.e., skills), do the numerical values of these probabilities give us useful information to help solve downstream tasks?
In the next section, we discuss how our main result (Theorem~\ref{thm:main-result}) helps to address this equestion.

\subsection{An Optimal Initialization for an Unknown Reward Function}

In this section we present a proof sketch of our main result, Theorem~\ref{thm:main-result}, and then discuss several interpretations of this result. Intuitively, the difficulty of solving downstream tasks depends on how much the agent must adapt its initialization to learn the optimal policy for downstream tasks. Different initializations will make some tasks easier and some tasks more difficult.

It may seem counterintuitive that unsupervised skill learning algorithms would learn a good initialization. Maximizing mutual information \emph{pushes} skills away from one another, not as finding a skill or policy that is close to every other policy. We now discuss a different perspective, thinking about how the \emph{average} state distribution over the skills changes during optimization. It turns out that maximizing mutual information corresponds to \emph{pulling} the average state distribution closer to other policies. Precisely, the average state distribution is pulled closer to the \emph{furthest} policy:
\begin{lemma}[Theorem 13.11 from~\citet{cover06}, based on~\citet{gallager1979source, ryabko1979coding}] \label{lemma:mi-minimax}
Let $\rho(\rvs \mid \rvz)$ be given. Maximizing mutual information is equivalent to minimizing the divergence between the average state distribution $\rho(\rvs)$ and the state distribution of the furthest skill:
\begin{equation*}
    \max_{p(\rvz)} I(\rvs; \rvz) = \min_{\rho(\rvs)} \max_{\rvz} \kl{\rho(\rvs \mid \rvz)}{\rho(\rvs)}
\end{equation*}
\end{lemma}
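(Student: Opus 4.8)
The plan is to recognize this identity as the classical duality between channel capacity and minimax redundancy, treating $\rho(\rvs \mid \rvz)$ as a channel whose inputs are skills $\rvz$ and whose outputs are states $\rvs$. The engine of the entire argument is a single algebraic decomposition: for \emph{any} distribution $q(\rvs)$ over states,
\begin{equation*}
    \E_{p(\rvz)}\!\left[\kl{\rho(\rvs \mid \rvz)}{q(\rvs)}\right] = I(\rvs; \rvz) + \kl{\rho(\rvs)}{q(\rvs)},
\end{equation*}
where $\rho(\rvs) = \E_{p(\rvz)}[\rho(\rvs \mid \rvz)]$ is the induced average state distribution. I would establish this by expanding the left-hand KL, writing $\log\frac{\rho(\rvs\mid\rvz)}{q(\rvs)} = \log\frac{\rho(\rvs\mid\rvz)}{\rho(\rvs)} + \log\frac{\rho(\rvs)}{q(\rvs)}$, and observing that after taking $\E_{p(\rvz)}\E_{\rho(\rvs\mid\rvz)}$ the first term is exactly $I(\rvs;\rvz)$ while the second collapses to $\kl{\rho(\rvs)}{q(\rvs)}$ because it depends on $\rvz$ only through the marginal.

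With this identity in hand, I would set $F(p(\rvz), q(\rvs)) \triangleq \E_{p(\rvz)}[\kl{\rho(\rvs\mid\rvz)}{q(\rvs)}]$ and apply a minimax theorem. Note $F$ is \emph{linear} in $p(\rvz)$ and \emph{convex} in $q(\rvs)$ (KL is convex in its second argument), and both arguments range over compact convex probability simplices, so Sion's minimax theorem gives $\max_{p} \min_{q} F = \min_{q} \max_{p} F$. The two inner optimizations are then immediate consequences of the setup. For the inner minimization over $q$, the identity shows $\min_q F = I(\rvs;\rvz)$, attained at $q = \rho(\rvs)$ since $\kl{\rho(\rvs)}{q(\rvs)} \ge 0$ with equality iff $q = \rho$; hence the left side of the minimax equals $\max_p I(\rvs;\rvz)$. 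For the inner maximization over $p$, since $F$ is an average of the quantities $\kl{\rho(\rvs\mid\rvz)}{q(\rvs)}$ it is maximized at a point mass, so $\max_p F = \max_{\rvz}\kl{\rho(\rvs\mid\rvz)}{q(\rvs)}$, and the right side of the minimax equals $\min_q \max_{\rvz}\kl{\rho(\rvs\mid\rvz)}{q(\rvs)}$. Equating the two sides yields the lemma.

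The main obstacle is justifying the minimax swap rigorously rather than the algebra, which is routine. The technical conditions for Sion require care when the index set of skills is a continuum and when KL divergences become $+\infty$ as supports shift on the boundary of the simplex; I would either restrict to the finite/compact setting where lower semicontinuity of KL suffices, or bypass the theorem entirely with a direct two-inequality argument. The easy inequality follows already from the identity, since $I(\rvs;\rvz) = \min_q \E_{p(\rvz)}[\kl{\rho(\rvs\mid\rvz)}{q}] \le \max_{\rvz}\kl{\rho(\rvs\mid\rvz)}{q}$ for every $q$, giving $\max_p I \le \min_q \max_{\rvz}\kl{\rho(\rvs\mid\rvz)}{q}$. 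For the reverse inequality I would take the capacity-achieving $p^*(\rvz)$ with induced marginal $\rho^*(\rvs)$ and invoke the preceding Lemma (the KKT characterization), which gives $\kl{\rho(\rvs\mid\rvz)}{\rho^*(\rvs)} = d_\text{max} = \max_p I$ on the support and $\le d_\text{max}$ off it; evaluating $\min_q\max_{\rvz}$ at $q = \rho^*$ then bounds it above by $\max_p I$. Establishing that the off-support divergences also respect this bound is the genuinely nontrivial step, and is precisely the Kuhn--Tucker condition for channel capacity that underlies Theorem 13.11 of~\citet{cover06}.
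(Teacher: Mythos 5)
Your proposal is correct, but note that the paper itself offers no proof of this lemma at all: it is imported wholesale as Theorem 13.11 of Cover \& Thomas (the redundancy--capacity theorem), with the citation standing in for the argument. What you have written is essentially the standard proof of that cited theorem, so rather than diverging from the paper you are filling in the proof it delegates to the literature. Your two ingredients are exactly the classical ones: the decomposition $\E_{p(\rvz)}[\kl{\rho(\rvs \mid \rvz)}{q(\rvs)}] = I(\rvs;\rvz) + \kl{\rho(\rvs)}{q(\rvs)}$ (which immediately gives both $I = \min_q \E_p[\KL]$ and the easy inequality $\max_p I \le \min_q \max_{\rvz} \kl{\rho(\rvs \mid \rvz)}{q(\rvs)}$), and then either Sion's theorem or the direct evaluation at the capacity-achieving pair $(p^*, \rho^*)$ for the reverse inequality. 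Both routes are sound, and you are right to prefer the two-inequality argument given the semicontinuity and $+\infty$ issues that make a blunt appeal to Sion delicate. One dependency is worth flagging: your hard direction leans on the paper's preceding (unnamed) KKT lemma, which asserts that supported skills attain $d_\text{max}$ where $d_\text{max}$ is the maximum divergence over \emph{all} $\rvz$, supported or not; the paper only asserts that lemma via ``the KKT conditions'' without proof, and you correctly identify the off-support bound $\kl{\rho(\rvs \mid \rvz)}{\rho^*(\rvs)} \le d_\text{max}$ as the genuinely nontrivial content (Gallager's Kuhn--Tucker condition for channel capacity). So your proof is complete relative to that lemma, exactly as the paper's exposition is.
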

This characterization can be understood as a \emph{facility location problem}~\citep{farahani2009facility} or a \emph{1-center problem}~\citep{minieka1970m, garfinkel1977m}. Since maximizing mutual information learns an average state distribution that is close to other policies, we expect that this average state distribution will be a good initialization for learning these other policies. This result provides the intuition behind our main result, which we present in Appendix~\ref{appendix:proof-main-result}.

\paragraph{Relationship with meta-learning.}
We now discuss a close connection between this result and meta-learning. The pretraining objective (Eq.~\ref{eq:pretraining}) bears a resemblance to the objective for meta-learning algorithms~\citep{franceschi2018bilevel, lee2019meta, rajeswaran2019meta}: both aim to find a parameter initialization for which an inner optimization problem can be efficiently solved. Whereas these methods assume access to a training distribution, skill learning algorithms used for pretraining do not have information about the downstream task. Moreover, meta-learning typically studies the average performance at test time, our objective measures the worst-case performance at test time. From this perspective, unsupervised skill learning could be interpreted as optimal pretraining for an idealized downstream adaptation procedure that searches over policies in the space of state marginals. Of course, this is an abstraction of practical RL methods, which must search in the space of policy parameters. This connection suggests that it might be possible to design meta-learning algorithms using the tools of mutual information skill learning, but the underlying mutual information objective might need to be modified so that adaptation can happen in the space of policy parameters, rather than state marginal distributions.

\subsection{Implications for Future Skill Learning Algorithms}

Our analysis sheds light on fundamental limitations of skill learning algorithms based on mutual information. First, as explained in Sec.~\ref{sec:num-skills}, existing algorithms based on mutual information do not learn skills that are sufficient for representing every potentially optimal policy. Future skill learning algorithms that can represent every policy may prove better at adapting to new tasks.

The second limitation is the adaptation procedure for which these methods produce the optimal initialization, under an assumption of how adaptation is performed. The natural gradient used for adaptation is not the standard natural policy gradient~\citep{kakade2001natural}, which is performed in policy parameter space, but rather the natural gradient with respect to the state marginal distribution. This is a very strong (hypothetical) reinforcement learning algorithm, because it explores directly in the space of state marginals. This gradient implicitly depends on the environment dynamics, and computing this gradient would likely require exploring the environment to determine how changes in a policy's parameters affect changes in the states visited.
Said in other words, our analysis suggests that existing unsupervised skill learning methods are optimal for a somewhat unrealistic and idealized model of how downstream policy learning is performed. This suggests an opportunity to develop significantly more powerful algorithms that serve as effective initializations for more realistic adaptation procedures, such as those studied in meta-RL~\citep{wang2016learning, finn2017model, humplik2019meta}, and this is likely an exciting and promising direction for future work.

\subsection{Limitations of Analysis}
\label{sec:limitations}

In our analysis, we used a simplified model of skill learning algorithms, one that viewed policy parameters and latent codes as one monolithic representation, with different skills using entirely different sets of parameters.
Practical implementations of skill learning algorithms are less expressive, which may change the geometry of the state marginal polytope. For example, a limited policy class may be insufficient to represent some state marginal distributions, introducing ``holes'' in the state marginal polytope, or splitting the polytope into two disjoint regions. It remains to be seen whether these practical implementations actually learn fewer unique skills, and how these expressivity constraints affect the average state distribution.

\section{Conclusion}
\label{sec:conclusion}

We have used a geometric perspective to answer a number of open questions in the study of skill discovery algorithms and unsupervised RL. We also showed that the skills learned by unsupervised skill learning algorithms are not sufficient for maximizing every downstream reward function. However, we do show that the average state distribution of the skills provides an initialization that is optimal for solving unknown downstream tasks using a particular adaptation procedure. Our analysis showed that the number of skills is bounded by the number of states, suggesting that scaling these methods to learn more skills will hit a theoretical upper bound.  Taken together, we believe that this analysis sheds light on what prior methods are doing, when and where they should work, and what limitations could be addressed in the next generation of skill learning algorithms.

{\footnotesize
\vspace{3em}
\paragraph{Acknowledgements.}
We thank Abhishek Gupta, Alex Alemi, Archit Sharma, Jonathan Ho, Michael Janner, Ruosong Wang, and Shane Gu for discussions and feedback on the paper. This material is supported by the Fannie and John Hertz Foundation and the NSF GRFP (DGE1745016). %

}

{\footnotesize

}

\appendix

\section{Proofs and Additional Analysis}
\label{sec:proofs}

\subsection{Mathematical Definition of the State Marginal Polytope}
\label{sec:def-constraints}
We provide a formal description of the constraint set. Precisely, the set of achievable state marginal distributions are those for which the \emph{state-action} marginal distribution $\rho(\rvs, \rva)$ satisfies the following linear constraints~\citep[Eq. 6.9.2]{puterman1990markov}:
\begin{align*}
    \sum_{\rva' \in \gA} \rho(\rvs', a') &= (1 - \gamma) p_1(\rvs') + \gamma \sum_{\rvs \in \gS, \rva \in \gA} p(\rvs' \mid \rvs, \rva) \rho(\rvs, \rva) \quad \forall \rvs' \in \gS \\
    \rho(\rvs, \rva) & \ge 0 \quad \forall \rvs \in \gS, \rva \in \gA.
\end{align*}
The constraint that $\rho$ be a valid probability distribution (i.e., $\sum_{\rvs, \rva} \rho(\rvs, \rva) = 1$) is redundant with the constraints above. These constraints can also be written in matrix form as $(\tilde{I} - \gamma P)\vec{\rho}_{s, a} = (1 - \gamma) \vec{p}_1$, where $\tilde{I}$ is a binary matrix that sums the action probabilities for each state (i.e., $\tilde{I}\vec{\rho}_{s, a} = \vec{\rho}_s$), $P$ is the transition matrix, and $\vec{p}_1$ is the initial state distribution.

\subsection{Additional Intuition for the State Marginal Polytope}
First, let two policies $\pi_1(\rva \mid \rvs)$ and $\pi_2(\rva \mid \rvs)$ be given and consider their corresponding state marginals, $\rho_1(\rvs)$ and $\rho_2(\rvs)$. We illustrate these marginals in Fig.~\ref{fig:polytope} (center). Any state marginal on the line between the two state marginals $\rho_1$ and $\rho_2$ is also achievable. We can express all such state marginals as $\rho_\lambda = \lambda \rho_1 + (1 - \lambda) \rho_2$. One way to construct the policy that achieves the state marginal $\rho_\lambda$ is to form a \emph{non-Markovian} mixture policy: sample $i \sim \textsc{Bernoulli}(\lambda)$ at the start of each episode and then sample actions $\rva \sim \pi_i(\rva \mid \rvs)$ for each step of that episode.
By construction, this mixture policy will have a state occupancy measure of $\rho_\lambda$.

It turns out that there also exists a \emph{Markovian} policy $\pi_\lambda$ that achieves the state marginal $\rho_\lambda$:
\begin{equation}
    \pi_\lambda(\rva \mid \rvs) = \lambda(\rvs) \pi_1(\rva \mid \rvs) + (1 - \lambda(\rvs)) \pi_2(\rva \mid \rvs) \quad \text{where} \quad \lambda(\rvs) \triangleq \frac{\lambda \rho_1(\rvs)}{\lambda \rho_1(\rvs) + (1 - \lambda) \rho_2(\rvs)} \label{eq:mixture-policy}
\end{equation}
While the existence proof is somewhat involved~\citep[Theorem 2.8]{ziebart2010modeling}\citep[Theorem 6.1]{feinberg2012handbook}, the construction of the policy is surprisingly simple.
First, note that it is sufficient to ensure $\rho_\lambda(\rvs, \rva) = \lambda \rho_1(\rvs, \rva) + (1 - \lambda) \rho_2(\rvs, \rva)$. Then, we apply Bayes' rule to determine the corresponding policy:
\begin{align*}
    \pi_\lambda(\rva \mid \rvs) = \frac{\rho_\lambda(\rvs, \rva)}{\rho_\lambda(\rvs)}
    &= \frac{\lambda \rho_1(\rvs, \rva) + (1 - \lambda) \rho_2(\rvs, \rva)}{\lambda \rho_1(\rvs) + (1 - \lambda) \rho_2(\rvs)} \\
    &= \frac{\lambda \pi_1(\rva \mid \rvs) \rho_1(\rvs) + (1 - \lambda) \pi_2(\rva \mid \rvs) \rho_2(\rvs)}{\lambda \rho_1(\rvs) + (1 - \lambda) \rho_2(\rvs)} \\
    &= \lambda(\rvs) \pi_1(\rva \mid \rvs) + (1 - \lambda(\rvs)) \pi_2(\rva \mid \rvs) \quad \text{where} \quad \lambda(\rvs) \triangleq \frac{\lambda \rho_1(\rvs)}{\lambda \rho_1(\rvs) + (1 - \lambda) \rho_2(\rvs)}.
\end{align*}

By induction, one can also show that convex combinations of \emph{multiple} state marginals also yields achievable state marginal distributions:
\begin{fact}
The convex hull of achievable state marginal distributions is also achievable.
\end{fact}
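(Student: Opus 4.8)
The plan is to prove the statement by induction on the number $n$ of state marginals appearing in the convex combination, reducing the general case to the two-point case that has already been established just above via the explicit Markovian mixture policy of Eq.~\ref{eq:mixture-policy}. Concretely, let $\rho_1(\rvs), \dots, \rho_n(\rvs)$ be achievable state marginals and let $\rho(\rvs) = \sum_{i=1}^n \lambda_i \rho_i(\rvs)$ with $\lambda_i \ge 0$ and $\sum_i \lambda_i = 1$; the goal is to exhibit a (Markovian) policy whose state occupancy measure equals $\rho(\rvs)$.

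The base case $n = 1$ is immediate, and the case $n = 2$ is exactly the construction in Eq.~\ref{eq:mixture-policy}, where the state-dependent mixing weight $\lambda(\rvs)$ produces a single Markovian policy realizing $\lambda \rho_1 + (1 - \lambda)\rho_2$. For the inductive step, I would assume the claim for $n$ marginals and consider $n+1$. Writing $\Lambda \triangleq \sum_{i=1}^n \lambda_i = 1 - \lambda_{n+1}$, if $\Lambda = 0$ then $\rho = \rho_{n+1}$ is achievable by assumption; otherwise I regroup
\[
    \rho(\rvs) = \Lambda \left( \sum_{i=1}^n \frac{\lambda_i}{\Lambda}\, \rho_i(\rvs) \right) + \lambda_{n+1}\, \rho_{n+1}(\rvs).
\]
The inner sum is a convex combination of $n$ achievable marginals (its weights $\lambda_i / \Lambda$ are nonnegative and sum to one), so by the inductive hypothesis it is realized by some achievable marginal $\rho'(\rvs)$. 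Then $\rho$ is a convex combination of the two achievable marginals $\rho'$ and $\rho_{n+1}$ with weights $\Lambda$ and $\lambda_{n+1}$, which sum to one, so the $n=2$ case (Eq.~\ref{eq:mixture-policy}) yields a Markovian policy achieving $\rho$. This closes the induction.

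I do not expect any genuine obstacle here: the entire substantive content lives in the two-point base case, whose nontrivial part --- the existence of a single Markovian (rather than merely non-Markovian mixture) policy attaining the blended occupancy measure --- is already supplied by Eq.~\ref{eq:mixture-policy} and cited to \citep[Theorem 2.8]{ziebart2010modeling}. The induction itself only uses the associativity of convex combinations (the regrouping above), so the one point requiring care is the degenerate case $\Lambda = 0$ (equivalently, division by zero in the normalized weights), which I handle separately as noted. This argument also immediately recovers Proposition~\ref{prop:convex} as the finitely-supported special case.
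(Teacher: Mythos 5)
Your proof is correct and takes essentially the same route as the paper: the paper also reduces to the two-point Markovian mixture construction of Eq.~\ref{eq:mixture-policy} and extends to arbitrary convex combinations by induction, exactly as you do. Your explicit regrouping step and handling of the degenerate $\Lambda = 0$ case merely spell out details the paper leaves implicit.
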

We illustrate this observation in Fig.~\ref{fig:polytope} (right). The construction of the corresponding policy is analogous to Eq.~\ref{eq:mixture-policy}.

\subsection{Proof of Lemma~\ref{lemma:skills-at-vertices}}
We now prove Lemma~\ref{lemma:skills-at-vertices}:
\begin{proof}
If $p(\rvz)$ were non-zero for a policy where the state marginal $\rho(\rvs \mid \rvz)$ did not lie at a vertex, then the mutual information objective could be improved by shifting $p(\rvz)$ to place probability mass on the vertex rather than interior point $\rho(\rvs \mid \rvz)$.
\end{proof}

\subsection{Alternative Strategies Also Fail to Discovery All Vertices}
\label{appendix:alternative-strategies}

\paragraph{Indicator reward functions.}

Another simple method for skill discovery, based in prior work~\citep[Sec.~3.1]{misra2020kinematic}, is to define a sparse reward function for visiting a particular state and action: $r_{s, a}(\rvs', \rva') \triangleq \mathbbm{1}(\rvs' = s, \rva' = a)$. This method also fails to learn all vertices of the state marginal polytope, as we show in the following worked example.

We define the dynamics for a 3-state, 2-action MDP, set $\gamma = 0.5$, and define the initial state distribution to be uniform over the three states. This counterexample has deterministic dynamics, represented by arrows below:
\begin{center}
\begin{tikzpicture}[
roundnode/.style={circle, draw=green!60, fill=green!5, very thick, minimum size=7mm},
]
\node[roundnode]        (s1) {1};
\node[roundnode]        (s3) [right of=s1] {3};
\node[roundnode]        (s2) [right of=s3] {2};

\draw[<->] (s1.east) -- (s3.west);
\draw[<->] (s3.east) -- (s2.west);

\path (s1) edge [loop above] (s1);
\path (s2) edge [loop above] (s2);

\end{tikzpicture}
\end{center}
None of the policies that are optimal for reward functions $\{r_{\rvs, \rva}\}$ are optimal for the reward $r(\rvs, a) = \mathbbm{1}(\rvs \in \{1, 2\})$. The intuition behind this counterexample is that there are multiple ways to maximize this reward function: remaining at state 1 and remaining at state 2. However, none of the optimal policies for $\{r_{s, a)}\}$ employ this strategy; rather, all these policies deterministically attempt to repeat a single transition.

\paragraph{Including actions in the mutual information ($I((\rvs, a); z)$).}

\begin{figure}[!h]
    \centering
    \begin{subfigure}[b]{0.49\linewidth}
         \centering
         \includegraphics[width=\linewidth]{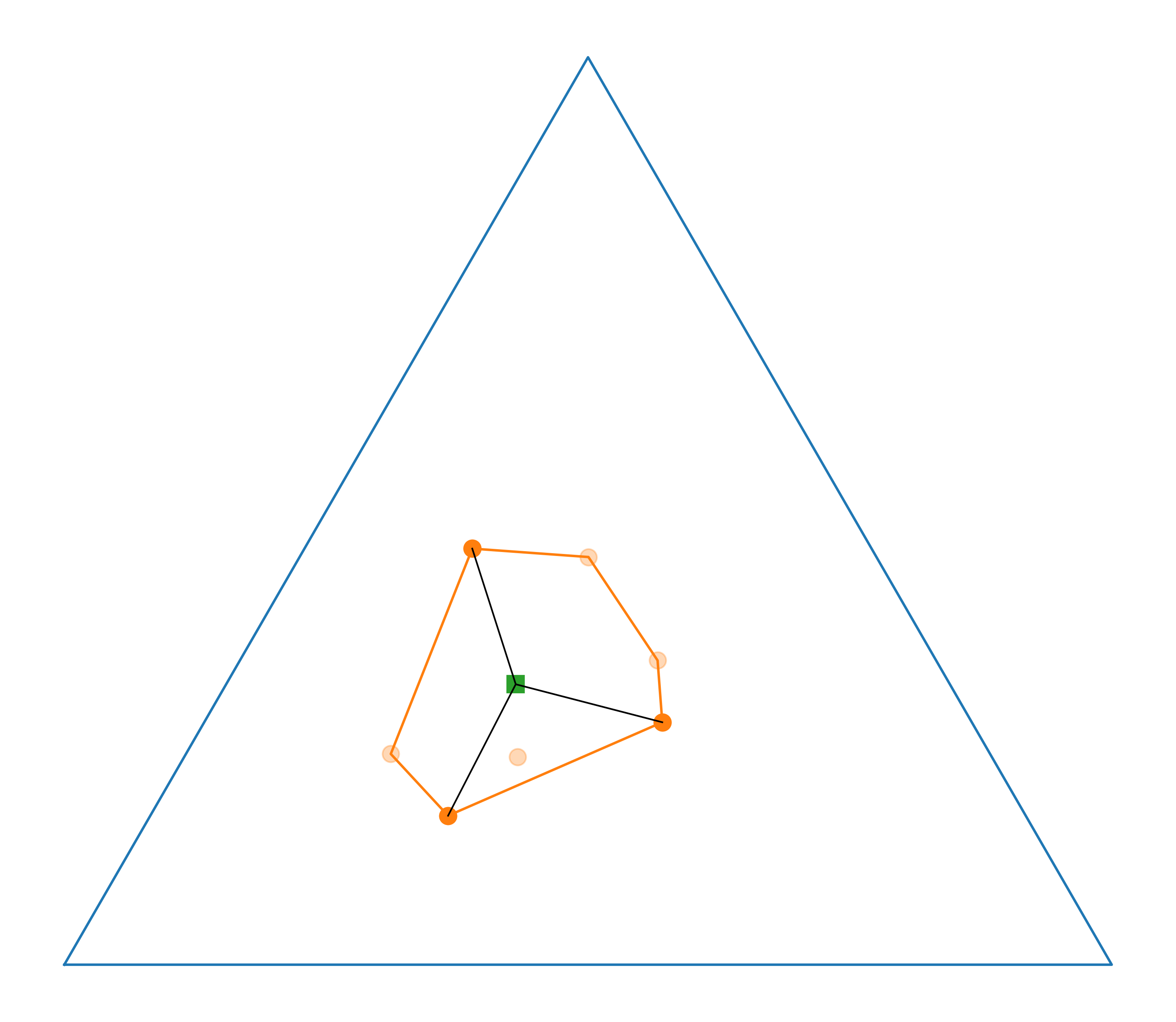}
         \caption{$I(\rvs; \rvz)$}
     \end{subfigure}
     \hfill
     \begin{subfigure}[b]{0.49\linewidth}
         \centering
         \includegraphics[width=\linewidth]{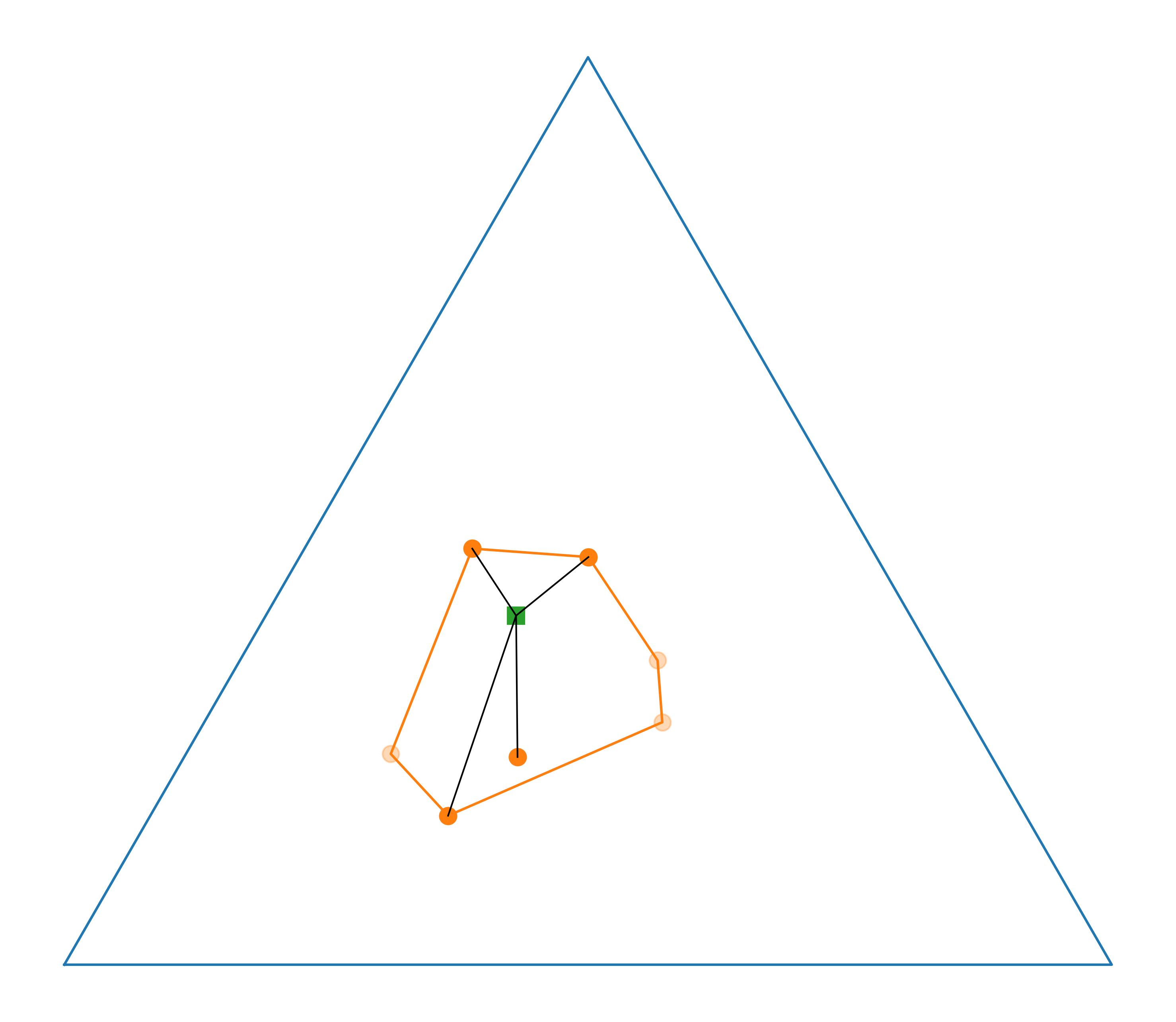}
         \caption{$I((\rvs, \rva); \rvz)$}
     \end{subfigure}
    \caption{\textbf{Counterexample 1}: Maximizing $I((\rvs, \rva); \rvz)$ does not result in discovering more vertices of the state marginal polytope.}
    \label{fig:counterexample1}
\end{figure}

Prior work on skill learning has considered many alternative definitions of the mutual information objective, experimenting with conditioning on actions~\citep{eysenbach2018diversity}, initial states~\citep{gregor2016variational}, and entire trajectories~\citep{co2018self}. We refer the reader to~\citet{achiam2018variational} for a survey of these methods. Consider, for instance, the objective $I((\rvs, \rva) ; \rvz)$. Our reasoning from Lemma~\ref{lemma:num-skills} suggests that this alternative objective would result in learning at most $|\gS|\cdot |\gA|$ skills, more than the $|\gS|$ skills learned when maximizing $I(\rvs; \rvz)$. This objective results in discovering vertices of the \emph{state-action} marginal polytope. However, these additional skills do not necessarily cover additional vertices of the \emph{state} marginal polytope. We now present a worked example where maximizing $I((\rvs, \rva); \rvz)$ only covers 3 of 6 vertices, the same number as covered by maximizing $I(\rvs; \rvz)$.

We describe Counterexample 1 from Sec.~\ref{sec:diayn} in more detail. This example has 3 states, 4 actions, and 7 skills.
The matrix describing the state-action marginals $p(\rvs, \rva \mid \rvz)$ is given as follows:
\begin{equation*}
P_{sa} = \begin{bmatrix}
  0.0716829 & 0.18424628 & 0.1795034 & 0.13672623 & 0.00295358 & 0.09510334\\
  0.03845856 & 0.05906211 & 0.08669371 & 0.01741351 & 0.02568876 & 0.10246764\\
  0.11750427 & 0.14607 & 0.11480248 & 0.00287273 & 0.00954288 & 0.10701075\\
  0.02102705 & 0.02261918 & 0.14991443 & 0.23005368 & 0.06343457 & 0.01514798\\
  0.13386153 & 0.06171002 & 0.09080011 & 0.16650628 & 0.05625721 & 0.06271223\\
  0.10353768 & 0.09575361 & 0.02849315 & 0.00634354 & 0.13433545 & 0.05968919\\
  0.05926209 & 0.10606773 & 0.04541879 & 0.05518233 & 0.14630736 & 0.12098995\\
  0.05692926 & 0.0745452 & 0.04428352 & 0.12954613 & 0.0261651 & 0.13530253\\
  0.13704367 & 0.01333569 & 0.10392952 & 0.04119388 & 0.11826303 & 0.12777541\\
  0.12092377 & 0.0703527 & 0.08563638 & 0.08155578 & 0.04348018 & 0.05650998\\
  0.06520597 & 0.00354202 & 0.11542116 & 0.09079959 & 0.14099273 & 0.05765374\\
  0.04477772 & 0.03235234 & 0.07636344 & 0.00824066 & 0.09351822 & 0.27113241\\
  0.20516233 & 0.19498386 & 0.03909264 & 0.11229717 & 0.06929778 & 0.09957651\\
  0.02888635 & 0.08654659 & 0.00456412 & 0.10995985 & 0.0328871 & 0.0167457\\
\end{bmatrix}
\end{equation*}
The matrix describing the state marginals is computed as follows:
\begin{equation*}
    P_a = P_{sa}
    \begin{bmatrix}
  1. & 0. & 0.\\
  1. & 0. & 0.\\
  1. & 0. & 0.\\
  1. & 0. & 0.\\
  0. & 1. & 0.\\
  0. & 1. & 0.\\
  0. & 1. & 0.\\
  0. & 1. & 0.\\
  0. & 0. & 1.\\
  0. & 0. & 1.\\
  0. & 0. & 1.\\
  0. & 0. & 1.\\
\end{bmatrix}
\end{equation*}
We illustrate the skills found by maximizing $I(\rvs; \rvz)$ and $I((\rvs, \rva); \rvz)$ in Fig.~\ref{fig:counterexample1}. Note that including the actions in the mutual information does \emph{not} result in discovering more vertices of the state marginal polytope.

\subsection{Proof of Theorem~\ref{thm:main-result}}
\label{appendix:proof-main-result}

\begin{proof}
Our proof proceeds in two steps. First, we will solve the adaptation problem, assuming that both the reward function $r(\rvs)$ and the initialization $\rho(\rvs)$ are given. We then substitute the optimal policy $\rho^*(\rvs)$ into the pretraining problem and solve for the optimal initialization $\rho(\rvs)$.

First, we solve for the optimal policy $\rho^*(\rvs)$ of the adaptation step. Using calculus of variations, the optimal policy can be written as
\begin{equation*}
    \rho^*(\rvs) = \frac{\rho(\rvs) e^{r(\rvs)}}{\int \rho(\rvs') e^{r(\rvs')} ds'}.
\end{equation*}
We can then express the cost of adaptation as follows:
\footnotesize \begin{align*}
   & \textsc{AdaptationObjective}(\rho(\rvs), r(\rvs)) \\
   &\quad = \max_{\rho^+(\rvs) \in \gC} \E_{\rho^+(\rvs)}[r(\rvs)] - \E_{\rho^*(\rvs)}[r(\rvs)] + \kl{\rho^*(\rvs)}{\rho(\rvs)} \\
   &\quad = \max_{\rho^+(\rvs) \in \gC} \E_{\rho^+(\rvs)}[r(\rvs)] - \frac{\int r(\rvs) \rho(\rvs) e^{r(\rvs)} ds}{\int \rho(\rvs') e^{r(\rvs')} ds'} + \frac{\int \rho(\rvs) e^{r(\rvs)}}{\int \rho(\rvs') e^{r(\rvs')} ds'} \left(\cancel{\log \rho(\rvs)} + r(\rvs) - \log \int \rho(\rvs') e^{r(\rvs')} ds' - \cancel{\log \rho(\rvs)} \right) ds \\
   &\quad = \max_{\rho^+(\rvs) \in \gC} \E_{\rho^+(\rvs)}[r(\rvs)] - \cancel{\frac{\int r(\rvs) \rho(\rvs) e^{r(\rvs)} ds}{\int \rho(\rvs') e^{r(\rvs')} ds'}} + \cancel{\frac{\int \rho(\rvs) e^{r(\rvs)}}{\int \rho(\rvs') e^{r(\rvs')} ds'} r(\rvs) ds}  - \log \int \rho(\rvs) e^{r(\rvs)} ds \\
   &\quad = \max_{\rho^+(\rvs) \in \gC} \E_{\rho^+(\rvs)}[r(\rvs)] - \log \int \rho(\rvs) e^{r(\rvs)} ds.
\end{align*}\normalsize

We can note write the pretraining problem (Eq.~\ref{eq:pretraining}) as follows:
\footnotesize \begin{align*}
    \min_{\rho(\rvs) \in \gC} \max_{r(\rvs) \in \mathbbm{R}^{|\gS|}} \textsc{AdaptationObjective}(\rho(\rvs), r(\rvs))
    &= \min_{\rho(\rvs) \in \gC} \max_{r(\rvs) \in \mathbbm{R}^{|\gS|}} \max_{\rho^+(\rvs) \in \gC} \E_{\rho^+(\rvs)}[r(\rvs)] - \log \int \rho(\rvs) e^{r(\rvs)} ds.
\end{align*} \normalsize

We now aim to solve this optimization problem for $\rho(\rvs)$. We start by noting that the $\max$ operator is commutative, so we can swap the order of the maximization of $r(\rvs)$ and $\rho^+(\rvs)$. That is, instead of the adversary first choosing an reward function and then choosing an optimal policy for that reward function, we can consider the adversary first choosing an optimal policy and then choosing a reward function for which that policy is optimal.
\footnotesize\begin{align*}
    \min_{\rho(\rvs) \in \gC} \max_{r(\rvs) \in \mathbbm{R}^{|\gS|}} \textsc{AdaptationObjective}(\rho(\rvs), r(\rvs))
    &= \min_{\rho(\rvs) \in \gC} \max_{\rho^+(\rvs) \in \gC} \max_{r(\rvs) \in \mathbbm{R}^{|\gS|}} \E_{\rho^+(\rvs)}[r(\rvs)] - \log \int \rho(\rvs) e^{r(\rvs)} ds.
\end{align*} \normalsize
Using calculus of variations, we determine that the optimal reward function satisfies
\begin{equation*}
    r(\rvs) = \log \rho^+(\rvs) - \log \rho(\rvs) + b,
\end{equation*}
where $b \in \mathbbm{R}$ is a constant scalar. There are many optimal reward functions, one for each choice of $b$. However, as the scalar $b$ will cancel out in the next step of our proof, this multiplicity is not a concern. Substituting this worst-case reward function, we can write the overall pretraining objective as
\begin{align*}
    & \min_{\rho(\rvs) \in \gC} \max_{r(\rvs) \in \mathbbm{R}^{|\gS|}} \textsc{AdaptationObjective}(\rho(\rvs), r(\rvs)) \\
    &\qquad = \min_{\rho(\rvs) \in \gC} \max_{\rho^+(\rvs) \in \gC} \E_{\rho^+(\rvs)}[\log \rho^+(\rvs) - \log \rho(\rvs) + b] - \log \int \cancel{\rho(\rvs)} \frac{\rho^+(\rvs)}{\cancel{\rho(\rvs)}}e^b ds \\
    &\qquad = \min_{\rho(\rvs) \in \gC} \max_{\rho^+(\rvs) \in \gC} \E_{\rho^+(\rvs)}[\log \rho^+(\rvs) - \log \rho(\rvs) + \cancel{b}] - \cancel{\log e^b} \\
    &\qquad = \min_{\rho(\rvs) \in \gC} \max_{\rho^+(\rvs) \in \gC} \E_{\rho^+(\rvs)}[\log \rho^+(\rvs) - \log \rho(\rvs)] \\
    &\qquad = \min_{\rho(\rvs) \in \gC} \max_{\rho^+(\rvs) \in \gC} \kl{\rho^+(\rvs)}{\rho(\rvs)}.
\end{align*}
Applying Lemma~\ref{lemma:mi-minimax} completes the proof.
\end{proof}

\subsection{Example where all deterministic policies are unique vertices}
\label{appendix:all-det-opt}

\begin{figure}[h]
\centering
\begin{tikzpicture}[
roundnode/.style={circle, draw=green!60, fill=green!5, very thick, minimum size=7mm},
]
\node[roundnode]        (s1) {$s_1$};
\node[roundnode]        (s2) [right of=s1] {$s_2$};
\node[roundnode]        (s3) [right of=s2] {$s_3$};
\node[roundnode]        (s4) [right of=s3] {$s_4$};
\node[roundnode]        (s5) [right of=s4] {$s_5$};

\draw[->] (s1.north east) to [out=45,in=135] node[above]{$a_1$} (s2.north west);
\draw[->] (s1.south east) to [out=315,in=225] node[below]{$a_2$} (s2.south west);
\draw[->] (s2.north east) to [out=45,in=135] node[above]{$a_1$} (s3.north west);
\draw[->] (s2.south east) to [out=315,in=225] node[below]{$a_2$} (s3.south west);
\draw[->] (s3.north east) to [out=45,in=135] node[above]{$a_1$} (s4.north west);
\draw[->] (s3.south east) to [out=315,in=225] node[below]{$a_2$} (s4.south west);
\draw[->] (s4.north east) to [out=45,in=135] node[above]{$a_1$} (s5.north west);
\draw[->] (s4.south east) to [out=315,in=225] node[below]{$a_2$} (s5.south west);

\path (s5) edge [loop above] (s5);
\end{tikzpicture}
\caption{}
\label{fig:all-det-opt}
\end{figure}

\section{Additional Plots}
Fig.~\ref{fig:diayn-more} plots the same experiment as Fig.~\ref{fig:diayn} in the main text, repeated for different randomly-sampled dynamics.

\begin{figure}
    \centering
    \begin{subfigure}[t]{0.3\textwidth}
         \centering
         \includegraphics[width=\textwidth]{figures/skills/diayn_skill_000.png}
    \end{subfigure}%
    ~
    \begin{subfigure}[t]{0.3\textwidth}
         \centering
         \includegraphics[width=\textwidth]{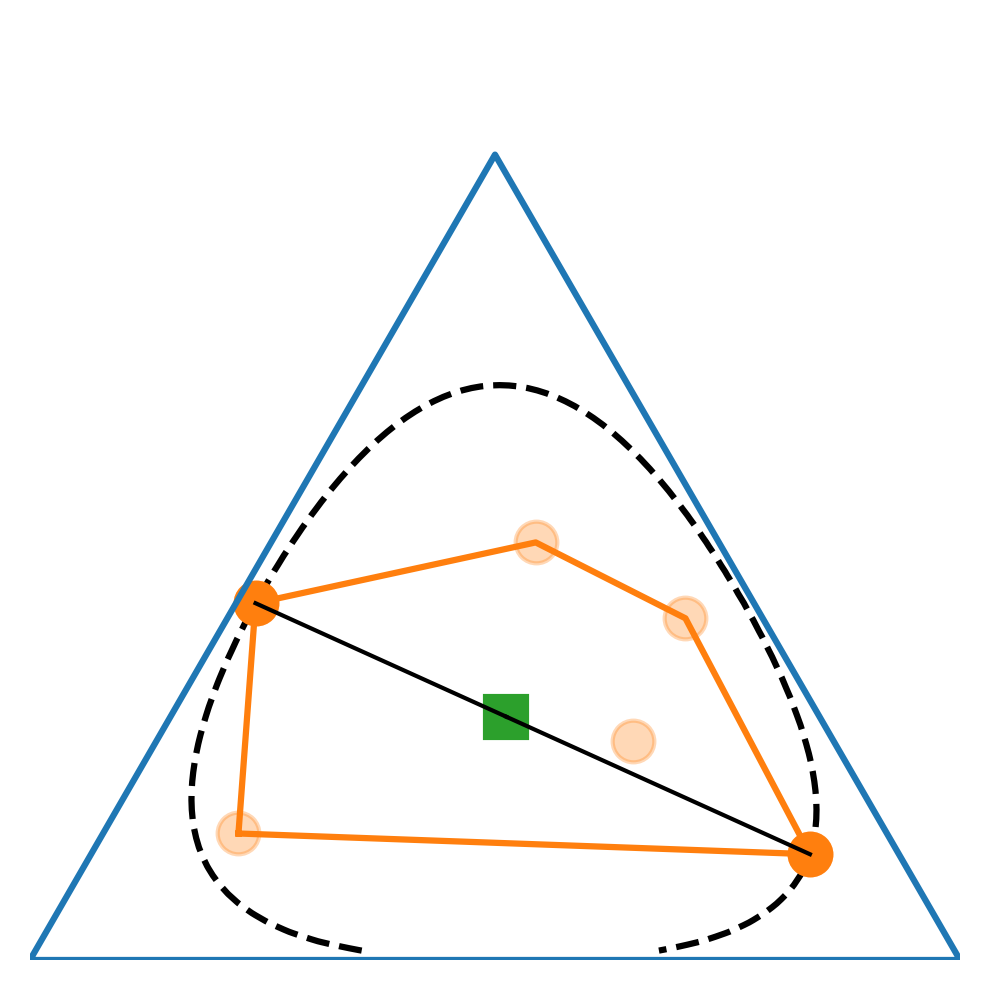}
    \end{subfigure}%
    ~
    \begin{subfigure}[t]{0.3\textwidth}
         \centering
         \includegraphics[width=\textwidth]{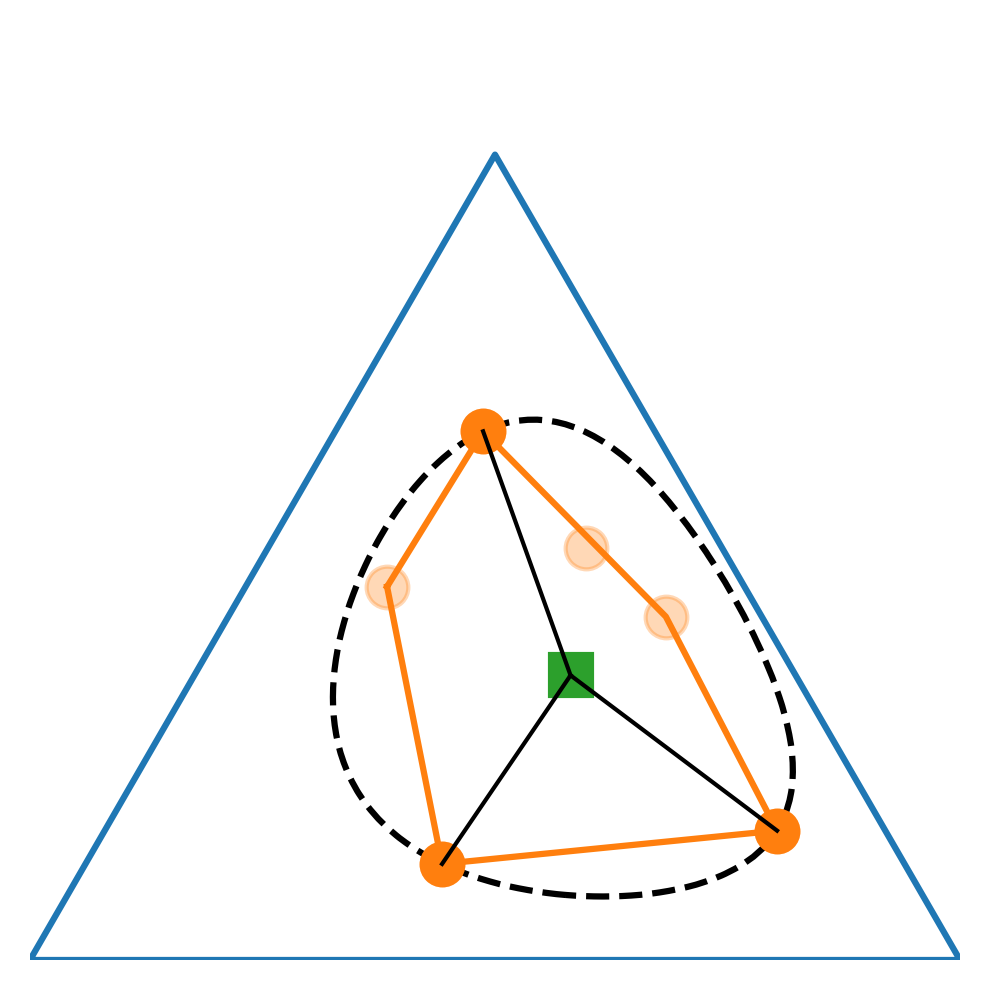}
    \end{subfigure}%
    
    \begin{subfigure}[t]{0.3\textwidth}
         \centering
         \includegraphics[width=\textwidth]{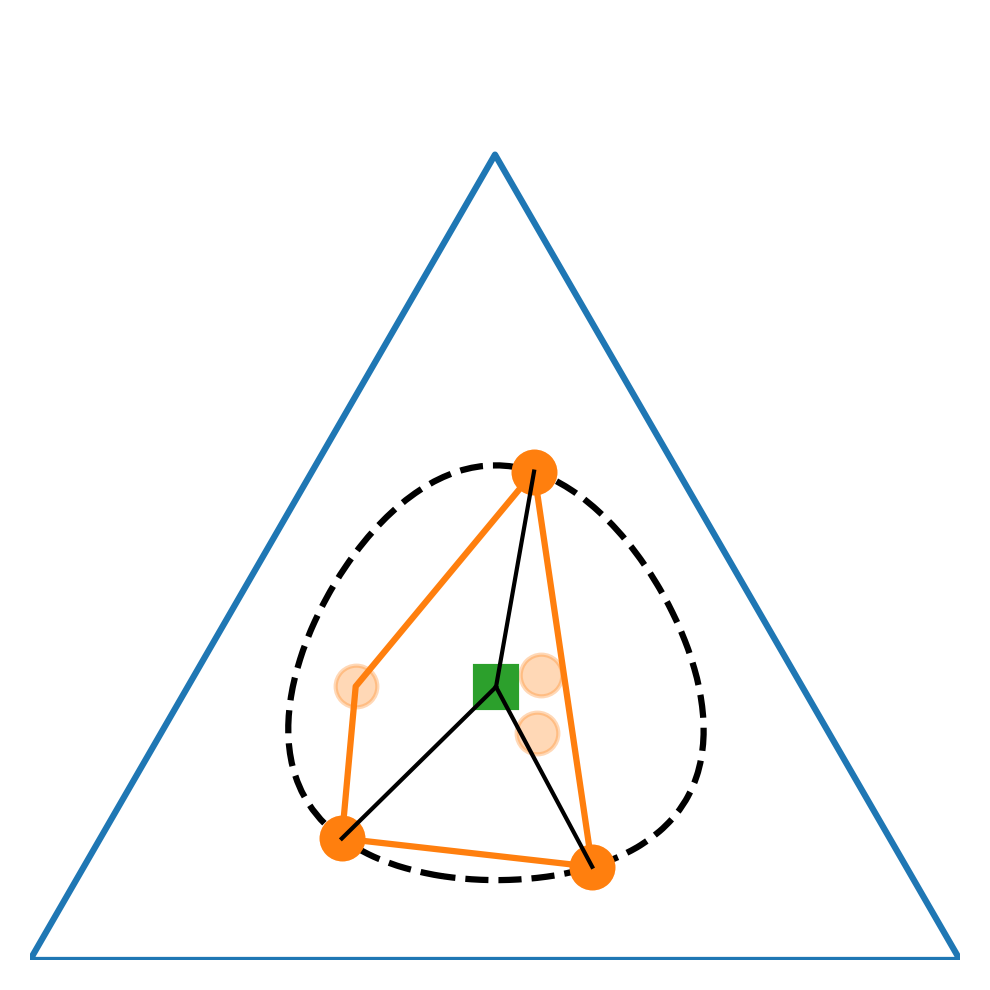}
    \end{subfigure}%
    ~
    \begin{subfigure}[t]{0.3\textwidth}
         \centering
         \includegraphics[width=\textwidth]{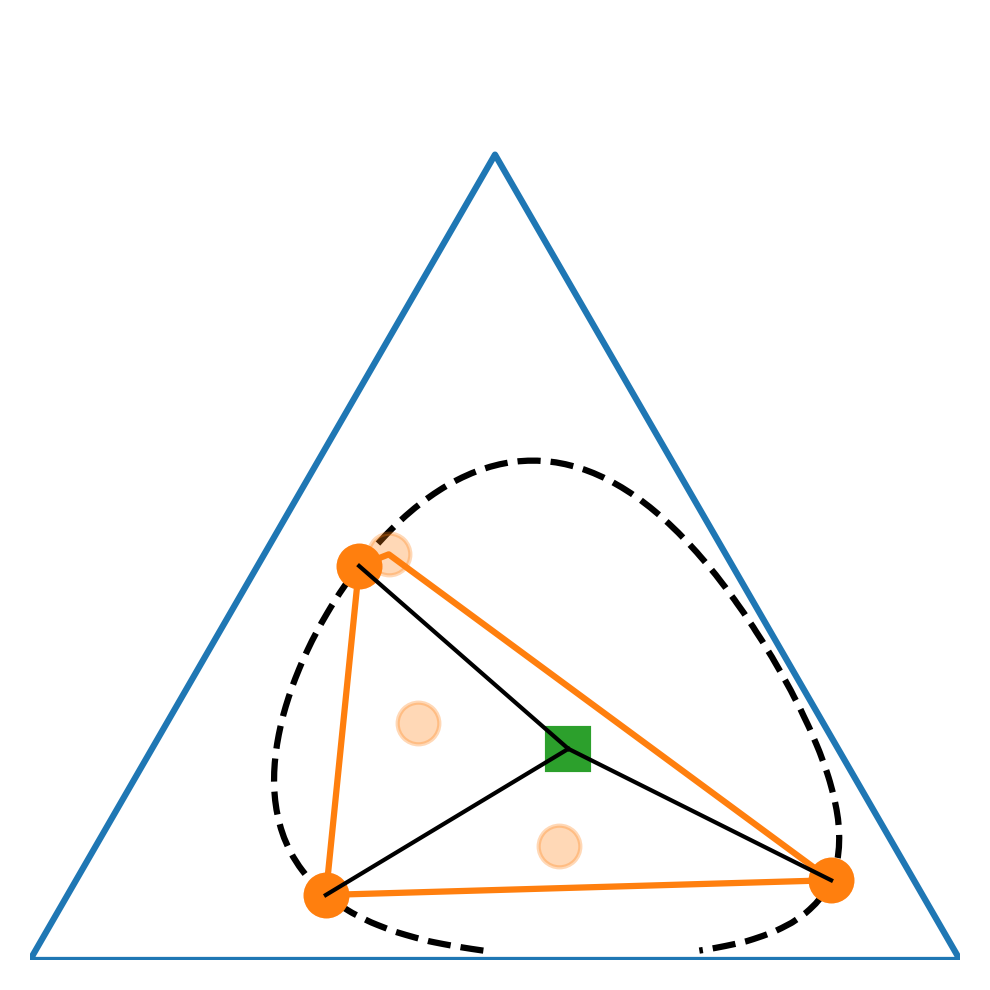}
    \end{subfigure}%
    ~
    \begin{subfigure}[t]{0.3\textwidth}
         \centering
         \includegraphics[width=\textwidth]{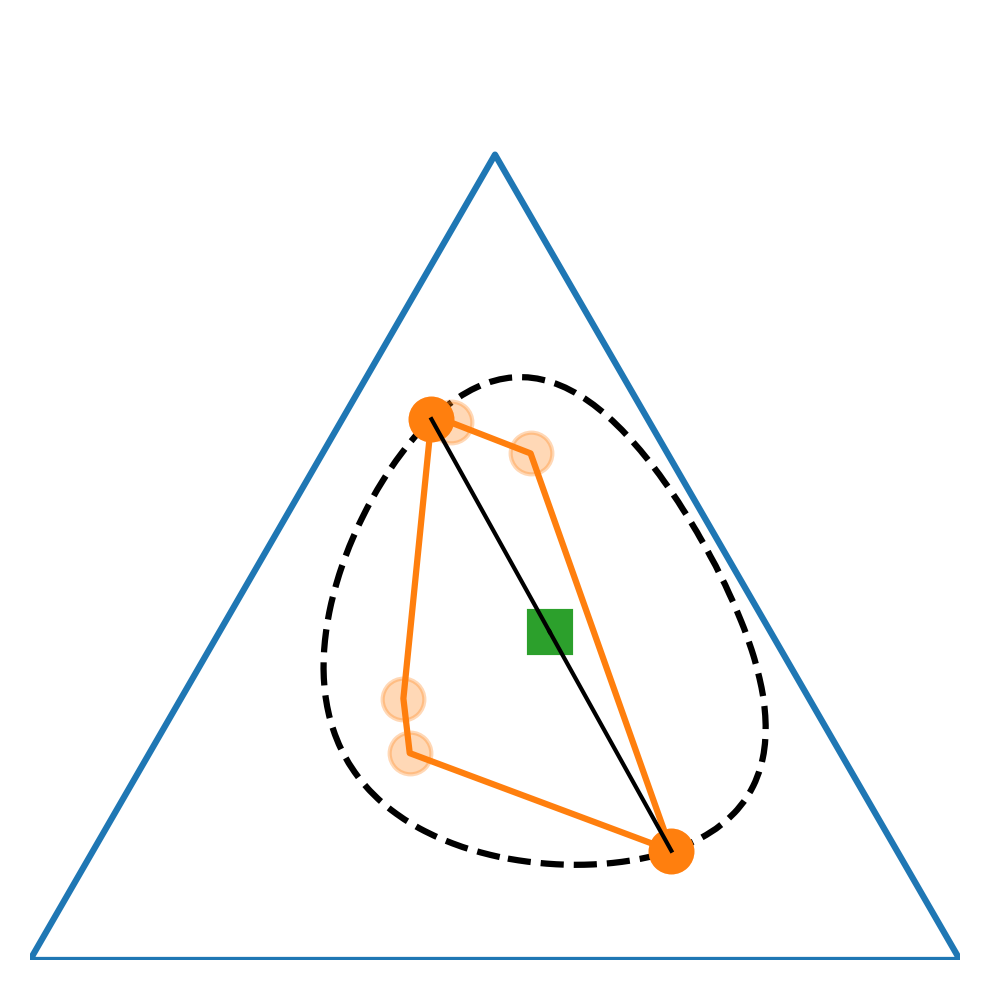}
    \end{subfigure}%
    
    \begin{subfigure}[t]{0.3\textwidth}
         \centering
         \includegraphics[width=\textwidth]{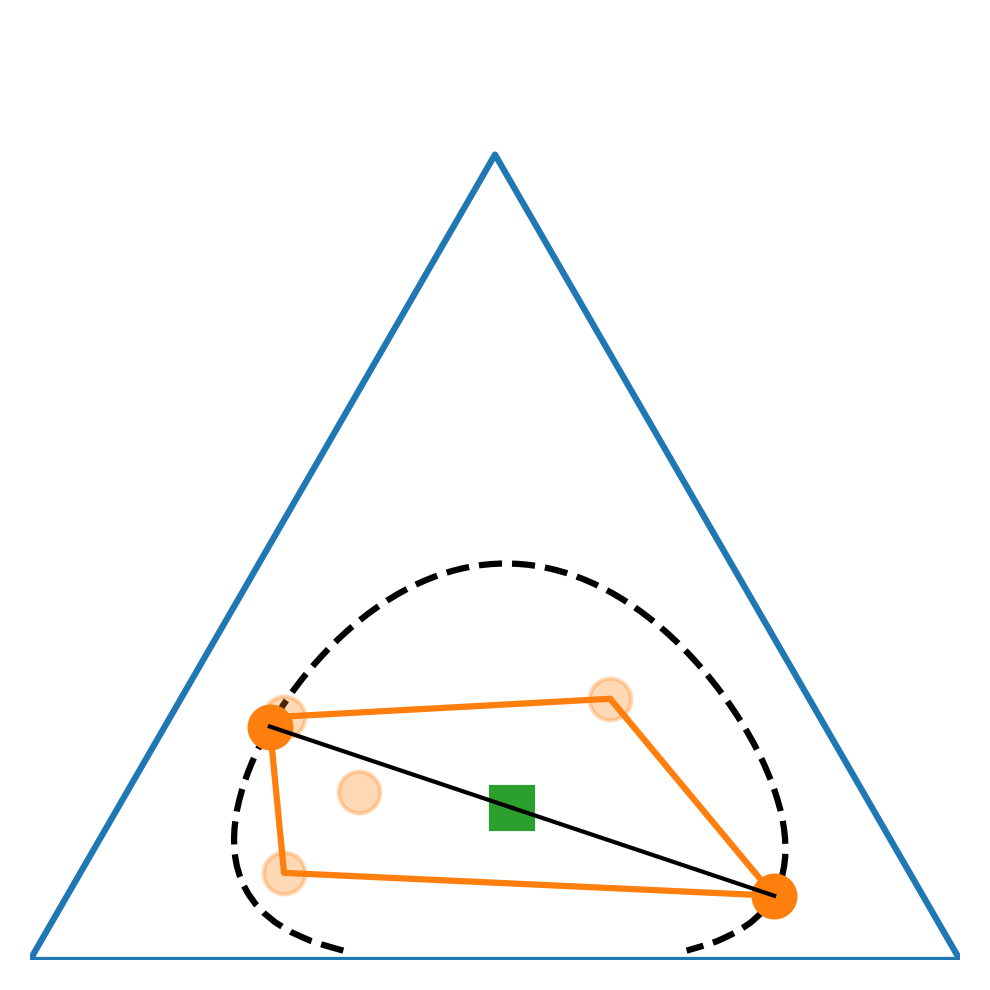}
    \end{subfigure}%
    ~
    \begin{subfigure}[t]{0.3\textwidth}
         \centering
         \includegraphics[width=\textwidth]{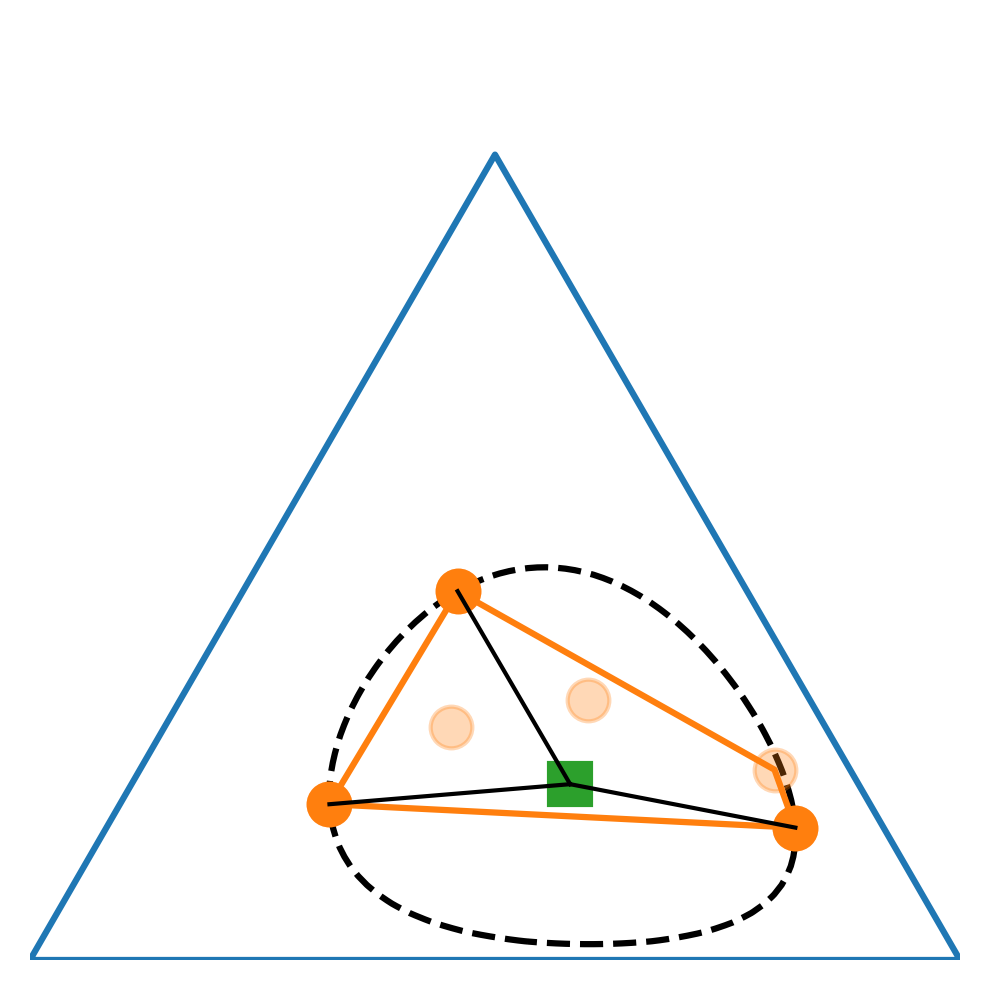}
    \end{subfigure}%
    ~
    \begin{subfigure}[t]{0.3\textwidth}
         \centering
         \includegraphics[width=\textwidth]{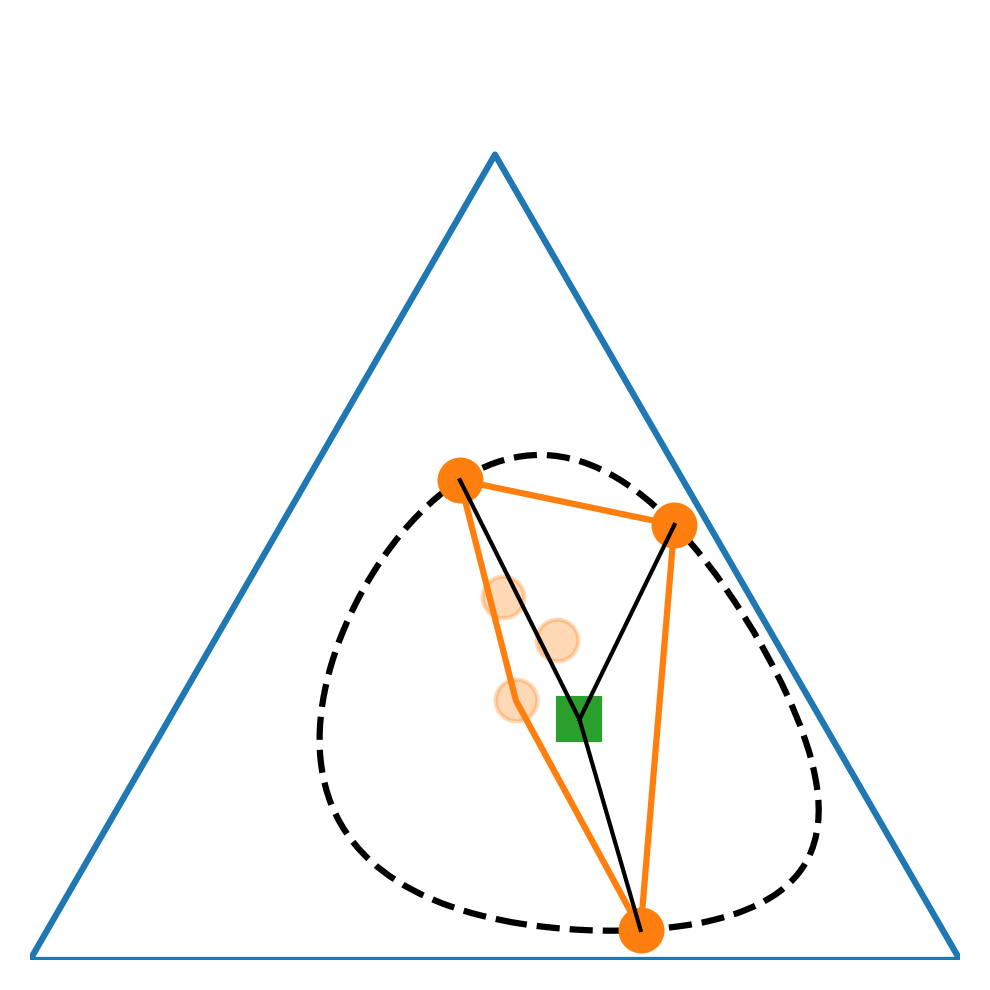}
    \end{subfigure}%
    
    \begin{subfigure}[t]{0.3\textwidth}
         \centering
         \includegraphics[width=\textwidth]{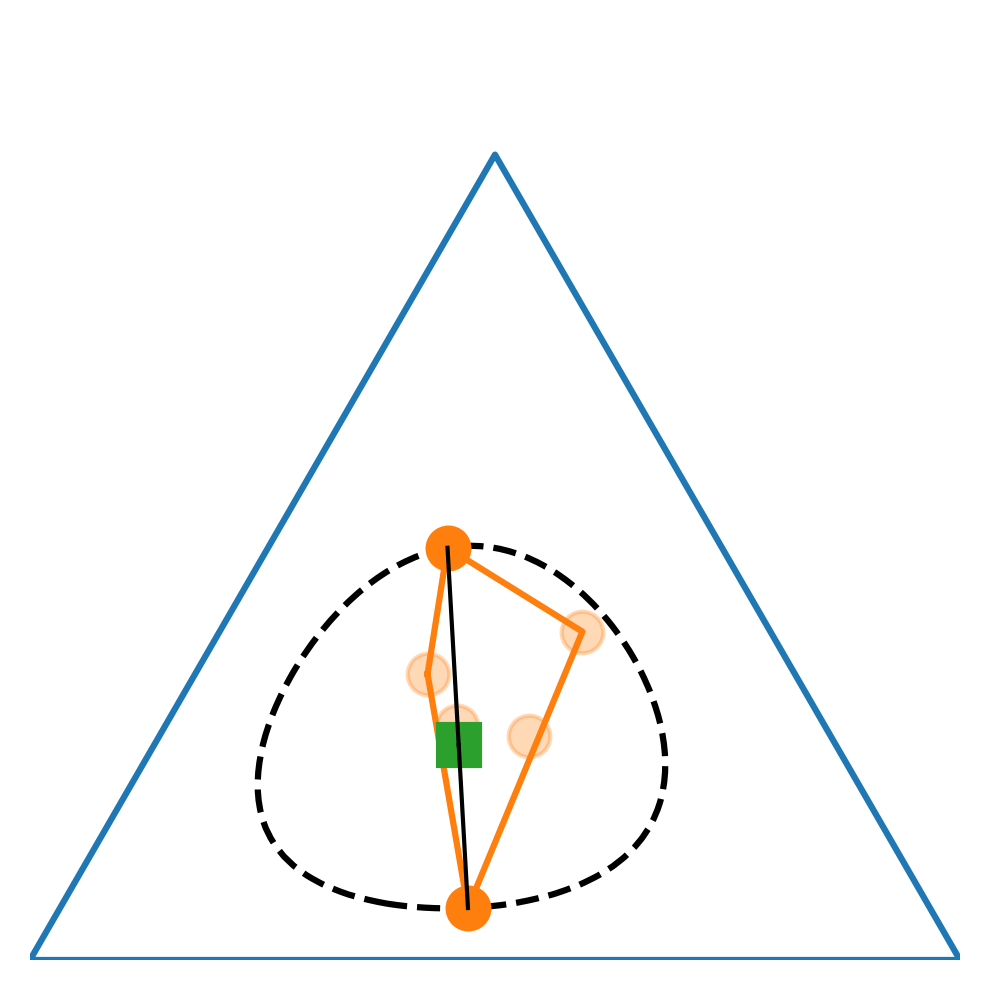}
    \end{subfigure}%
    ~
    \begin{subfigure}[t]{0.3\textwidth}
         \centering
         \includegraphics[width=\textwidth]{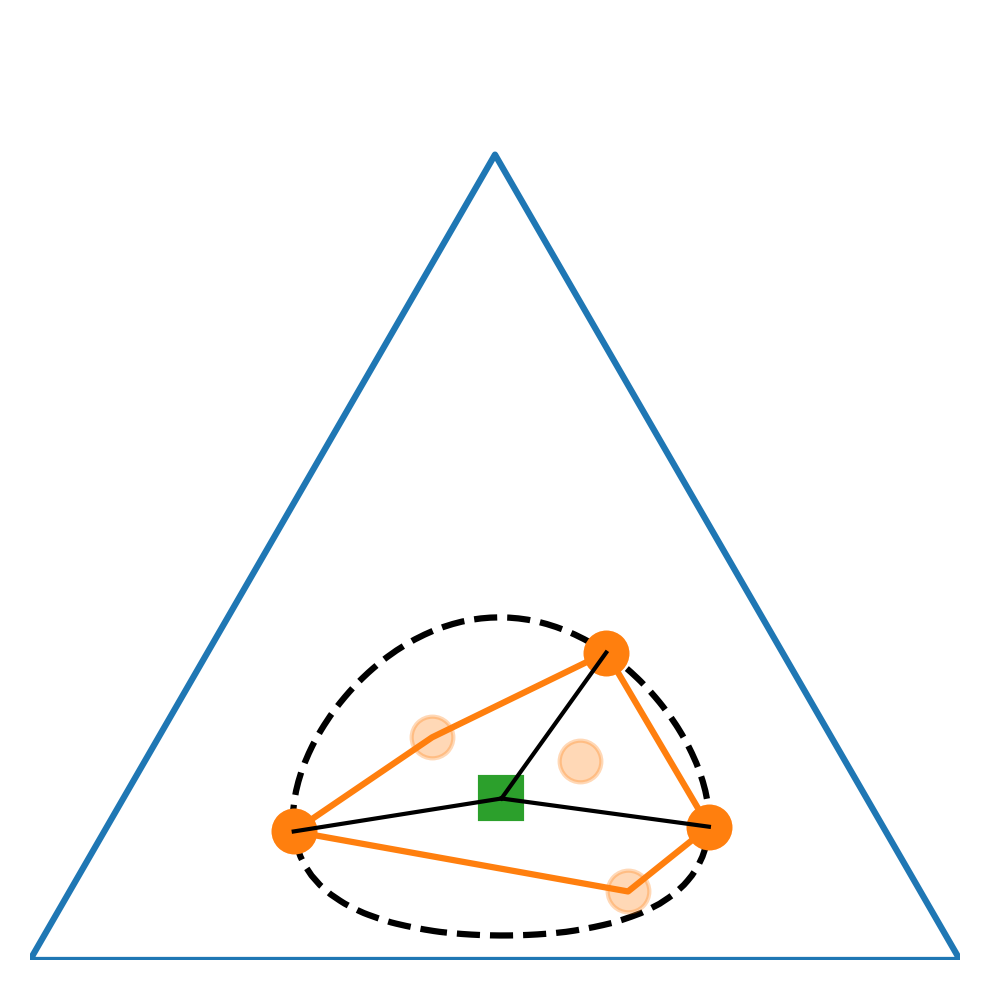}
    \end{subfigure}%
    ~
    \begin{subfigure}[t]{0.3\textwidth}
         \centering
         \includegraphics[width=\textwidth]{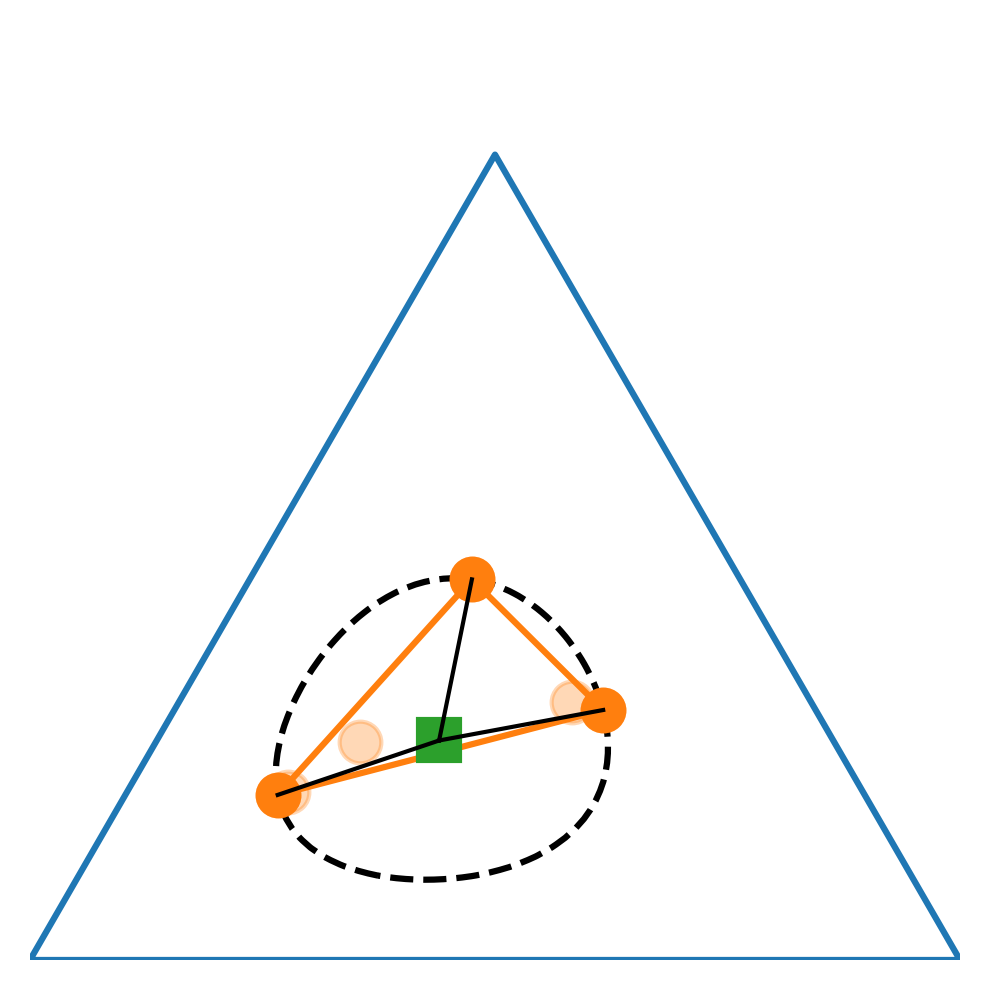}
    \end{subfigure}%

    \caption{\textbf{Information Geometry of Skill Learning}: More examples of the skills learned by MISL.}
    \label{fig:diayn-more}
\end{figure}
\end{document}